\documentclass[10pt]{article} 
\usepackage[preprint]{tmlr}

\usepackage{amsmath,amsfonts,bm}
\usepackage{amsthm}
\usepackage{bbm}
\usepackage{ stmaryrd }

\usepackage{amssymb}
\usepackage{bm}
\usepackage{bbm}
\usepackage[normalem]{ulem}

\usepackage{enumitem}
\usepackage{amsmath,amsthm}
\usepackage{amssymb}
\usepackage{amsmath}











\def\eqref#1{equation~\ref{#1}}









\def\1{\bm{1}}

\usepackage{hyperref}
\usepackage{url}

\usepackage[utf8]{inputenc} 
\usepackage[T1]{fontenc}    
\usepackage{hyperref}       
\usepackage{url}            
\usepackage{booktabs}       
\usepackage{amsfonts}       
\usepackage{nicefrac}       
\usepackage{microtype}      
\PassOptionsToPackage{dvipsnames}{xcolor}

\usepackage{graphicx}
\usepackage{subfigure}
\usepackage{caption}
\usepackage{amsmath}
\usepackage{amssymb}
\usepackage{mathtools}
\usepackage{amsthm}




\newcommand{\supdata}{((x_i, y_i))_{i=1}^n}

\newcommand\argmax[1]{{\rm arg}\max_{#1}}
\newcommand\argmin[1]{{\rm arg}\min_{#1}}




\newcommand{\cD}{\mathcal{D}}

\newcommand{\cF}{\mathcal{F}}
\newcommand{\cG}{\mathcal{G}}
\newcommand{\cP}{\mathcal{P}}
\newcommand{\cM}{\mathcal{M}}
\newcommand{\cN}{\mathcal{N}}
\newcommand{\cL}{\mathcal{L}}
\newcommand{\cQ}{\mathcal{Q}}

\newcommand{\cU}{\mathcal{U}}
\newcommand{\bP}{\mathbb{P}}
\newcommand{\bE}{\mathbb{E}}
\newcommand{\bR}{\mathbb{R}}
\newcommand{\bOne}{1}
\newcommand\BOne[1]{1\llbracket #1 \rrbracket}
\usepackage{ stmaryrd }

\newcommand\loss[2]{\mathcal{L}\left({#1},{#2}\right)}

\newcommand{\htheta}{\hat{\theta}}

\newcommand{\dtrain}{\mathcal{D}_{\it train}}

\usepackage{amsmath,amssymb}

\usepackage{color}

\usepackage{amsmath}
\usepackage{empheq}
\definecolor{myblue}{rgb}{.9,1,.9}

\newlength\mytemplen
\newsavebox\mytempbox

\makeatletter
\newcommand\mybluebox{%
    \@ifnextchar[
       {\@mybluebox}%
       {\@mybluebox[0pt]}}

\def\@mybluebox[#1]{%
    \@ifnextchar[
       {\@@mybluebox[#1]}%
       {\@@mybluebox[#1][0pt]}}

\def\@@mybluebox[#1][#2]#3{
    \sbox\mytempbox{#3}%
    \mytemplen\ht\mytempbox
    \advance\mytemplen #1\relax
    \ht\mytempbox\mytemplen
    \mytemplen\dp\mytempbox
    \advance\mytemplen #2\relax
    \dp\mytempbox\mytemplen
    \colorbox{myblue}{\hspace{1em}\usebox{\mytempbox}\hspace{1em}}}

\makeatother

\ifx\BlackBox\undefined
\newcommand{\BlackBox}{\rule{1.5ex}{1.5ex}}  
\fi

\ifx\QED\undefined
\def\QED{~\rule[-1pt]{5pt}{5pt}\par\medskip}
\fi

\ifx\proof\undefined
\newenvironment{proof}{\par\noindent{\bf Proof\ }}{\hfill\BlackBox\\[2mm]}
\fi

\theoremstyle{plain}
\ifx\theorem\undefined
\newtheorem{theorem}{Theorem}

\newtheorem{lemma}{Lemma}

\theoremstyle{definition}

\fi

\newcommand{\sgn}{\mathop{\mathrm{sign}}}

\usepackage{hyperref}
\usepackage{url}
\usepackage{wrapfig}
\usepackage{subfloat}
\usepackage{floatrow}

\usepackage{booktabs}
\usepackage{multirow}

\usepackage[capitalize,noabbrev]{cleveref}

\theoremstyle{plain}
\theoremstyle{definition}

\theoremstyle{remark}

\usepackage[textsize=tiny]{todonotes}

\usepackage{Definitions}
\newcommand{\bsigma}{\bar \sigma}

\title{Functional Risk Minimization}


\author{
  \name \hspace{-1mm}Ferran Alet\thanks{Work done at MIT\citep{alet2022learning}; author is now at Google DeepMind. Contact: ferran@google.com.} \\
  \addr MIT
  \AND
  \name Clement Gehring \\
  \addr MIT
  \AND
  \name Tom\'as Lozano-P\'erez \\
  \addr MIT
  \AND
  \name Kenji Kawaguchi \\
  \addr National University of Singapore (NUS)
  \AND
  \name Joshua B. Tenenbaum \\
  \addr MIT
  \AND
  \name Leslie Pack Kaelbling \\
  \addr MIT
}



\begin{document}

\maketitle

\begin{abstract}
The field of Machine Learning has changed significantly since the 1970s. However, its most basic principle, Empirical Risk Minimization (ERM), remains unchanged. 
We propose Functional Risk Minimization~(FRM), a general framework where losses compare functions rather than outputs. This results in better performance in supervised, unsupervised, and RL experiments.
In the FRM paradigm, for each data point
 $(x_i,y_i)$ there is function $f_{\theta_i}$ that fits it: $y_i = f_{\theta_i}(x_i)$.
This allows FRM to subsume ERM for many common loss functions and to capture more realistic noise processes. 
We also show that FRM provides an avenue towards understanding generalization in the modern over-parameterized regime, as its objective can be framed as finding the simplest model that fits the training data.
\end{abstract}

\section{Introduction}
\vspace{-1mm}

Although machine learning has changed significantly since the 1970s, its most basic principle, Empirical Risk Minimization (ERM), remains unchanged. ERM~\citep{vapnik1969uniform} states that we can minimize a loss for unseen data by instead minimizing the same loss on a training set. When models were simple and small,
we could often prove that good training performance would guarantee good test performance. However, with the huge capacity of current neural networks, this is no longer true~\citep{zhang2016understanding}.

This paper proposes an alternative framework to ERM designed for modern ML, where models are large and datasets are diverse. There are three motivations for searching for an alternative: 1) ERM-based deep learning can be inefficient by orders of magnitude~\citep{frankle2018lottery}, 2) generalization in deep learning is not well understood~\citep{belkin2019reconciling}, and 3) improvements over ERM would apply to the entire field.

Datasets have increased massively in diversity: we used to train on small, standardized datasets like MNIST~\citep{lecun1998mnist} and Shakespeare books, now we train on images and text from the entire internet~\citep{radford2021learning}.
For example, we used to train a language model on Wikipedia and then fine-tune it on Shakespeare, using two different functions $f_{\theta_{\rm Shakespeare}}\approx f_{\theta_{\rm wiki}}$ to model their distributions. 
In contrast, we now train a \textit{single} function $f_{\theta_{\rm internet}}$ on general Internet data, which contains both Wikipedia and books. Since Shakespeare and a Wikipedia writer don't have the same style, this single model cannot simultaneously capture the two different functions, $f_{\theta_{\rm Shakespeare}}$ and $ f_{\theta_{\rm wiki}}$. This results in noise which an ML framework should model. 
%

ERM with losses such as cross-entropy or MSE implicitly models all the particularities of each training example through noise in output space: $y_i\sim \cP\left(\cdot|f_{ \theta_{\rm internet}}(x_i)\right)$ (see subsection \ref{subsec:FGM_properties}).
 However, the data remains the same whether separated into multiple datasets or kept together into a single one.
If variability across datasets is captured with different functions, variability within a dataset should also be modeled using function noise $y_i\sim f_{\cP(\cdot|\theta_{\rm internet})}(x_i)$, not output noise.
To address this, 
we propose \textbf{Functional Generative Models~(FGMs)}, where each datapoint $(x_i, y_i)$ comes from its own (unseen) function, $f_{\theta_i}$, which fits it: $y_i=f_{\theta_i}(x_i)$, with $\theta_i \sim \cP\left(\cdot|\theta^*\right)$. 

ERM searches for a single function $f_{\theta^*}$ among a function class $\{f_{\theta}\}_{\theta\in\Theta}$ by  comparing the true answer $y_i$ with the output $f_{\theta^*}(x_i)$. Instead of assuming the existence of a privileged function $f_{\theta^*}$, FGMs consider a distribution over functions $\cP(\theta)$ to model function noise. From FGMs, we will derive the \textbf{Functional Risk Minimization~(FRM)} framework, where training objectives are measured in function space, as illustrated in figure~\ref{fig:highest_lowest}. 


\begin{figure}[t]
     \centering
     \subfigure[Old datasets had little variability \citep{lecun1998mnist,samaria1994face}. Modern datasets are diverse, but with structured variability\citep{jahanian2019steerability}.\label{fig:modern_datasets}]{
        \includegraphics[width=0.2\linewidth]{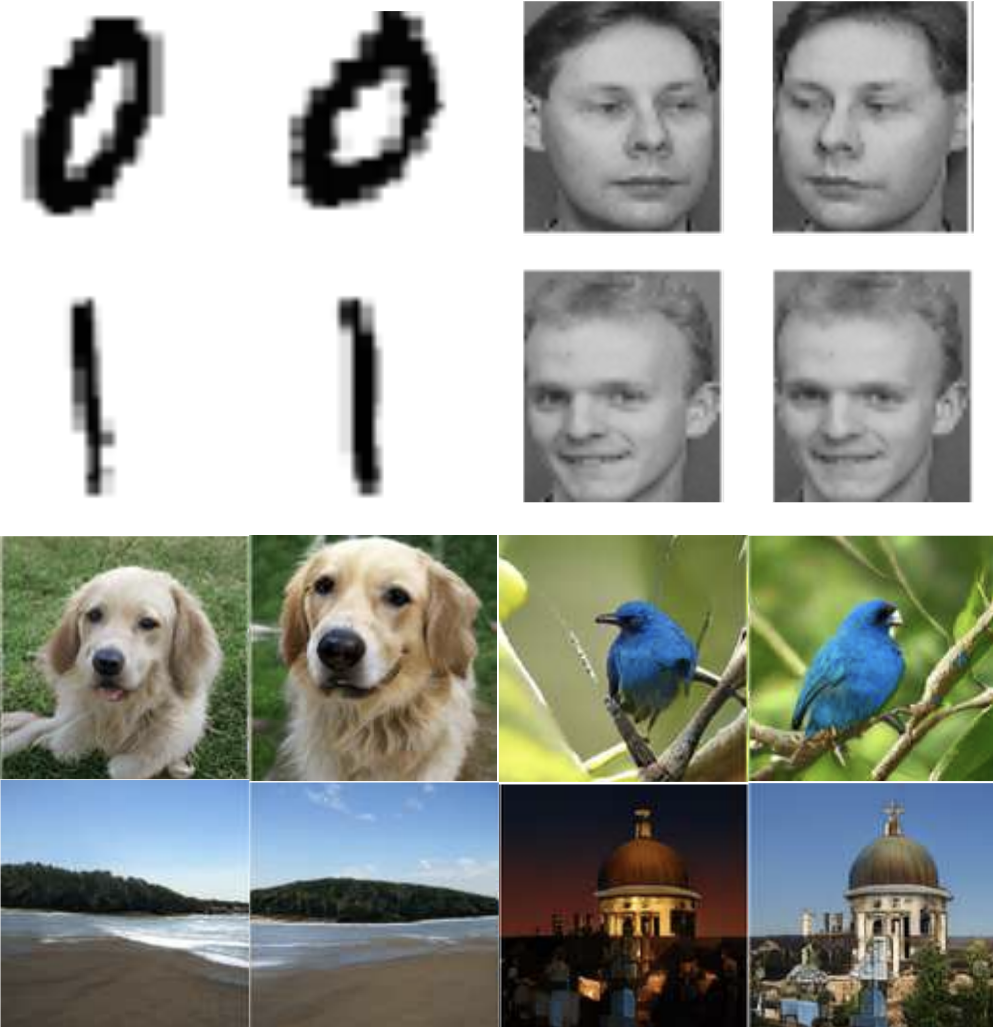}
        }
     \hfill
     \subfigure[Difference between ERM and FRM when predicting the edges of the image on the left using a simple CNN. For a fixed ERM pixel loss, we can find images with very different FRM losses. Since neural networks often capture natural variability~\citep{ulyanov2018deep}, images with low functional loss retain most of the structure despite having high errors in output (pixel) space. In this case, a small modification in the weights and biases of the CNN has a big effect on the output while keeping a very similar function.   \label{fig:highest_lowest} ]{
        \includegraphics[width=0.75\linewidth]{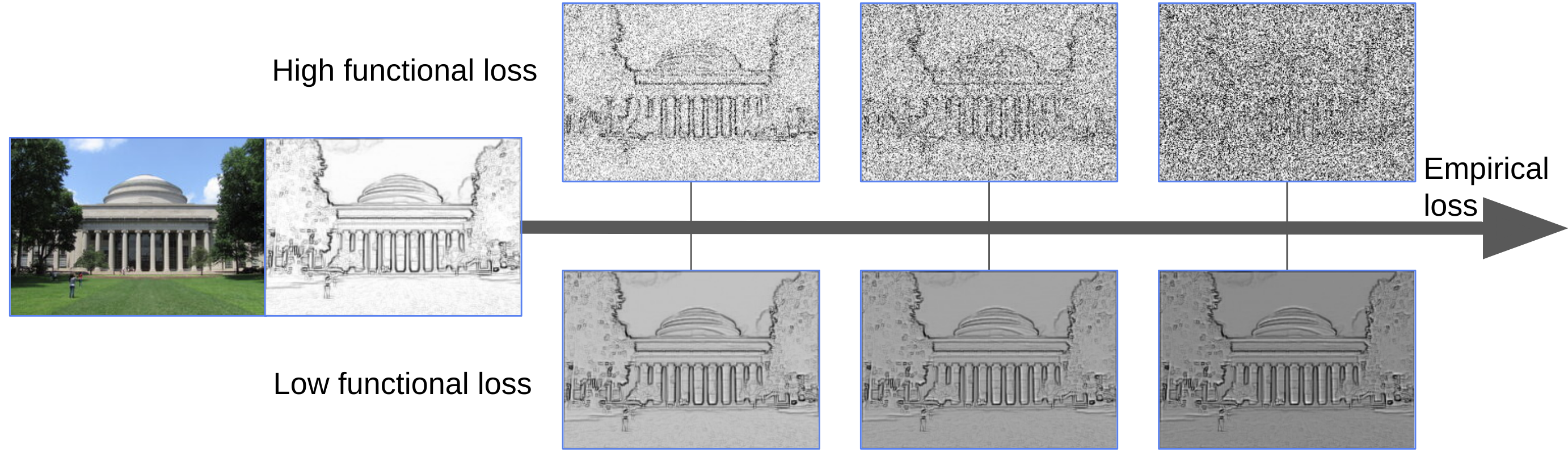}
     }
    \caption[]{Modeling functional noise helps capture structured variations in diverse datasets.\label{fig:structured}\vspace{-5mm}}
\end{figure}

The main contributions of this work are the following:
\begin{enumerate}
    \item We introduce Functional Generative Models~(FGMs), a simple class of generative models that assigns a function to each datapoint.
    \item We derive the Functional Risk Minimization framework~(FRM), compute a tractable and scalable approximation for it, and link it to the generalization of over-parameterized neural networks.
    \item We provide empirical results showcasing the advantages of FRM in diverse experiments in supervised learning, unsupervised learning, and reinforcement learning.
\end{enumerate}

\section{Background and related work}
\subsection{Inference and risk minimization}~\looseness=-1
In parametric machine learning, the user specifies a dataset $\cD=\supdata$
, a parameterized function class $f_\theta$
, and a loss function $\loss{y}{f_\theta(x)}$
. Given this setting, ERM and FRM have the same goal: finding a single $\hat{\theta}$ with minimal expected risk over unseen data: $\min_{\htheta} \bE_{(x,y)\sim\cP(x,y)}\left[\loss{y}{f_{\htheta}(x)}\right]$.

\vspace{-1mm}
\paragraph{Empirical risk minimization~(ERM)}
A user of an ML framework, like ERM, wants to minimize a loss $\cL$ on unseen test data.
In ML, we frequently use ERM variants, minimizing $\cL$ on the training dataset instead. 
However, the optimal training objective depends on the loss function $\cL$, but also on the functional class $f_\theta$, and the data distribution $\cP(x,y)$. 
Often\footnote{This is not true in general, but it is satisfied for most common loss functions such as MSE and cross-entropy.}, ERM is equivalent to maximum likelihood with a noise model modeling $\cP(y|x)$ as $\cP(y|f_{\theta^*}(x))$ for a fixed $\theta^*$. 
However, the user-defined loss function $\cL$ may not have any connection with the data distribution. This imposes a suboptimal assumption.
\paragraph{Bayesian learning} Similar to maximum likelihood, the Bayesian setting usually assumes the existence of a true $\theta^*$ as well as a noise model $\cP(y|f_{\theta^*}(x))$ on the output. However, it further assumes $\theta^*$ comes from some known prior $q$: $\theta^*\sim q(\cdot)$ and explicitly disentangles inference of $\cP(y|x)$ from risk minimization of $\cL$. Thus the inference about the posterior, $\cP(\theta|\cD)\propto q(\theta)\cdot \cP(\cD|\theta)$, becomes independent of the loss. Only in the final prediction step, the loss function and posterior are combined to find the output with the lowest expected risk.~\looseness=-1

\begin{figure*}[t]
    \centering
    \includegraphics[width=0.9\linewidth]{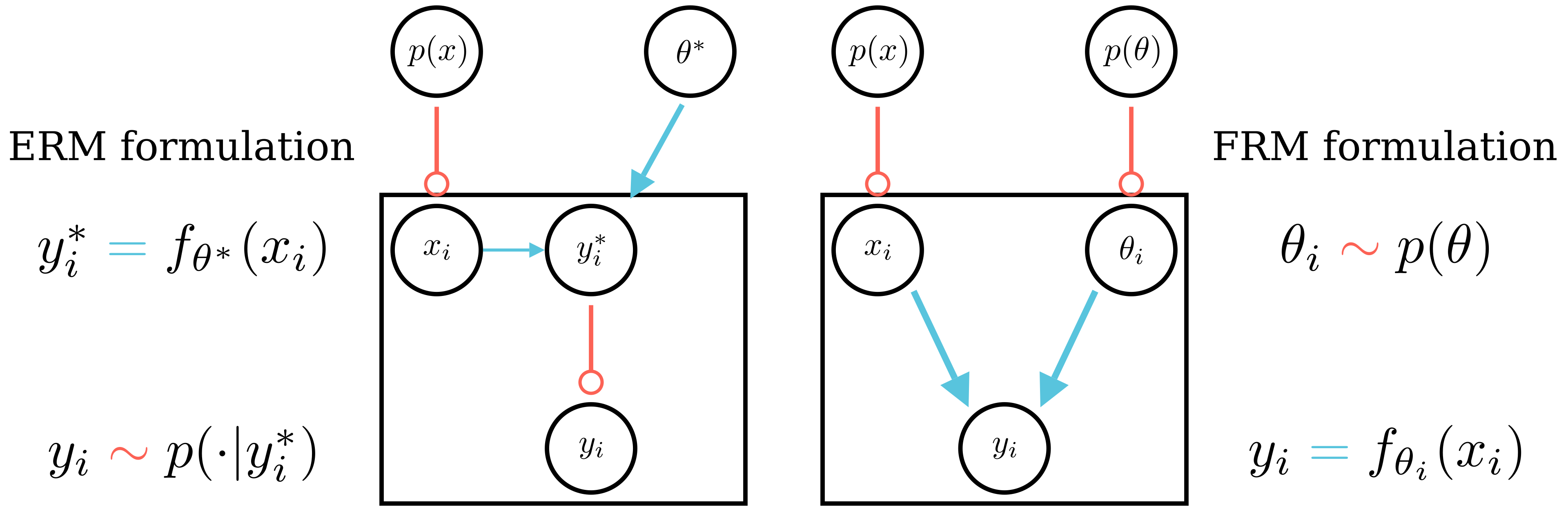}
    \caption[Plate diagram for generative model of ERM-assumption and FGMs]{For many common losses, ERM and FRM can be related to maximum likelihood under simple generative models. Red lines ending in a circle are stochastic, blue arrows are deterministic. ERM often models noise in output space, $y_i {\color{red} \sim} p(\cdot | y_i^*)$, and FRM explains it in function space, $\theta_i {\color{red} \sim} p(\theta)$. ~\looseness=-1}
    \label{fig:generative_model}
\end{figure*}

\vspace{-1mm}
\paragraph{Relations to FRM} Similar to Bayesian learning, FRM benefits from a clear distinction between inference and risk minimization. However, FRM assumes noise in function space $\cP(\theta)$ as the source of variability, not to be confused with epistemic uncertainty in the Bayesian setting. In contrast,  Bayesian learning assumes a single unknown function with noise only in output space. Similar to ERM, FRM aims at only using a single parameter $\theta^*$ at test-time, which avoids the challenging integration required in the Bayesian setting and its corresponding inefficiencies.

\subsection{Related work}~\label{sec:related_work}\vspace{-7mm}

FGMs treat each individual point as its own task or distribution, while remaining in the single-task setting. In this way, FGMs are related to multi-task learning~\citep{thrun2012learning} and meta-learning~\citep{hospedales2020meta}, particularly to works connected to Hierarchical Bayes~\citep{tenenbaum1999bayesian,griffiths2008bayesian,grant2018recasting}. Implementation-wise, FRM is closer to works looking at distances in parameter space~\citep{DBLP:journals/corr/abs-1803-02999} or using implicit gradients~\citep{lorraine2020optimizing,rajeswaran2019meta}. As detailed in sec.~\ref{subsec:FGM_properties}, FGMs leverage this multi-task literature to properly model noise in the single-task setting. 

Multiple works have noted the importance of function space for applications such as minimizing catastrophic forgetting in continual learning~\citep{kirkpatrick2017overcoming}, optimization~\citep{martens2015optimizing}, or exploration in reinforcement learning~\citep{fortunato2017noisy}.
Information geometry~\citep{amari2016information} provides a useful framework that formalizes the geometrical structure of distributions using tools from differential geometry. In contrast, this work leverages stochasticity in function space for modeling and learning.~\looseness=-1

Different alternatives to ERM have been proposed, particularly in the multi-task setting, such as adaptive~\citep{zhang2020adaptive} and invariant risk minimization~\citep{arjovsky2019invariant}. Also relevant are works modeling correlated label noise~\citep{collier2021correlated} and those aiming at flat minima~\citep{hochreiter1997flat}/minimizing sharpness~\citep{foret2020sharpness} in order to improve generalization on standard supervised learning. In contrast to these works, FRM has per-point functional perturbations stemming from the data distribution giving rise to the noise, instead of a regularization made on top of ERM with classic loss functions. 

Other works proposed per-point adaptations to \textit{tailor} a model to each specific input either to encode an inductive bias~\citep{alet2020tailoring,alet2021noether}, adapt to a new distribution~\citep{sun2019test,wang2020tent}, or ensure the test predictions are compatible with the training predictions~\citep{bibas2019deep}. However, these adaptations effectively fine-tune an imperfect ERM model to get it closer to an ideal model.
In contrast, in FRM, per-point models are not a mechanism, but a fundamental part of reality, resulting in losses in function space rather than output space.~\looseness=-1

\vspace{-3mm}
\section{Functional generative models: sampling per-point functions}~\label{subsec:sampling_functions}\vspace{-7mm}

\subsection{Description}

In machine learning, we want to reach conclusions about a distribution $\cP(x,y)$ from a finite dataset $\supdata$. However, there is no generalization without assumptions. From convolutions to graph neural networks and transformers, most research has focused on finding the right inductive biases for the mappings $x\mapsto y$. However, much less research has challenged the assumptions about the uncertainty of those mappings: $\cP(y|x)$. For instance, whenever we minimize mean-squared error on an image-prediction problem we are doing maximum likelihood assuming gaussian noise in pixel space. Altough, the actual noise is usually much more structured, as shown in figure~\ref{fig:structured}. 

In this work, we start from a single principle, which we call Functional generative models~(FGMs): we model each data-point $(x_i,y_i)$ as coming from its own function $f_{\theta_i}$ such that $y_i=f_{\theta_i}(x_i)$ and $\theta_i\sim \cP(\theta)$, see figure~\ref{fig:generative_model}. Notably, $\cP(\theta)$ is unknown in the same way that we do not know $\cP(x,y)$. FGMs can be seen as a special type of hierarchical Bayes~\citep{heskes1998solving,griffiths2008bayesian}, where each group has a single point, the lower-level is deterministic and each $\theta_i$ is an unobserved latent variable. Figure~\ref{fig:HoughTransform} shows FGMs for a linear function class. For a detailed illustration on a concrete example see app. ~\ref{sec:house_prices}.

\begin{figure}[t]
     \centering
     \subfigure[Each line represents the functional subspace that fits the datapoint of the same color(top-right plot).]{
        \includegraphics[width=0.45\linewidth]{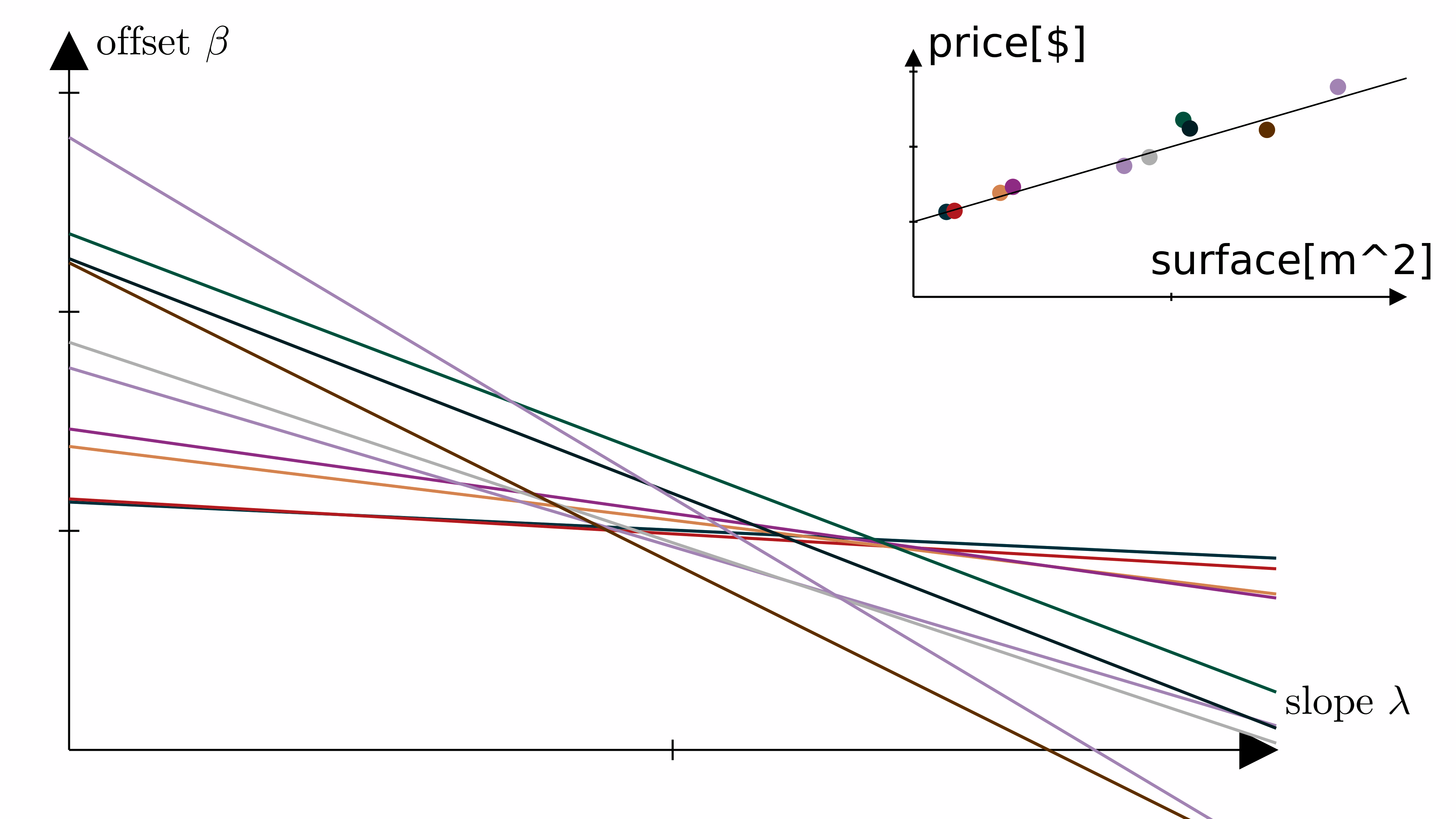}
        \label{fig:HoughTransformA}
     }
     \hfill
     \subfigure[The best parameter distribution (green) is quite certain in the offset $\beta$, and uncertain in the slope $\lambda$.]{
        \includegraphics[width=0.45\linewidth]{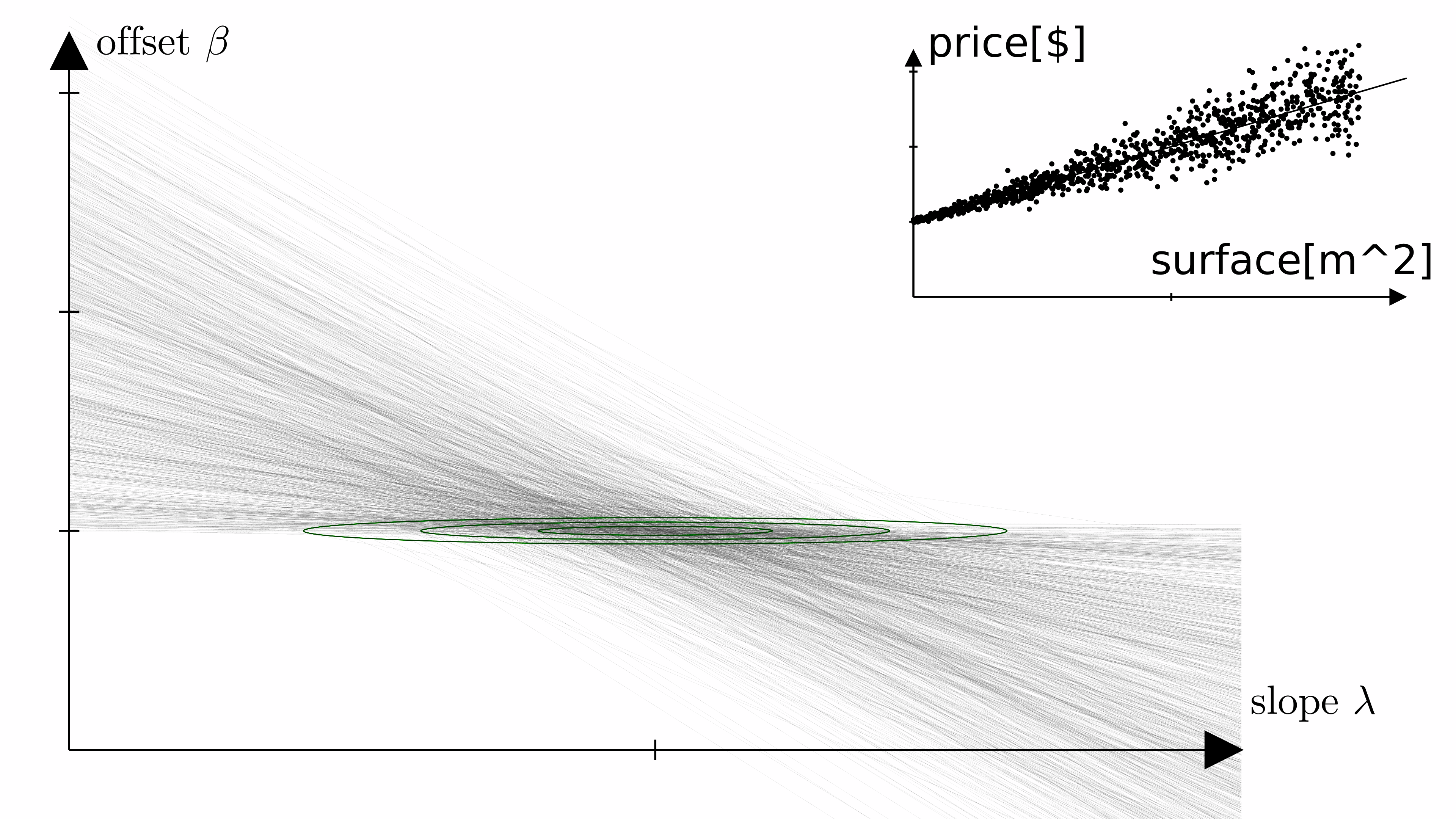}
        \label{fig:HoughTransformB}
     }
    \caption[Functional generative models for a linear function class.]{Functional generative models for a linear function class. We can plot the function space in 2D on the bottom-left of each sub-figure, with the actual data plotted on the top-right. \label{fig:HoughTransform}}
    \vspace{-4mm}
\end{figure}

\subsection{Properties of Functional Generative Models}~\label{subsec:FGM_properties}\vspace{-8mm}
\paragraph{Modeling the arbitrariness of dataset definitions} A dataset implicitly dictates what belongs to the data distribution $\cP(x,y)$, like a dataset of Boston houses sold in 2020, excluding houses from New York or from 2005. Each of these categories follows a slightly different distribution and, using Hierarchical Bayes, we could model them as similar parameter assignments to a single function class. More subtly, even the Boston-2020 dataset encompasses multiple distributions, such as houses sold from different neighborhoods, months, or colors. These hidden intra-distributions, when not described in the input, are a source of noise. 
In the absence of any information, the least restrictive assumption is that each point comes from its own distribution, giving rise to the observed functional noise. 
This leads to a Hierarchical Bayes model in which each datapoint's function is sampled from a learned distribution, $\theta_i \sim \cP(\cdot | \theta^*)$, representing functional generative models (FGMs).

\paragraph{Entrusting what the user already trusts} A user needs to provide a learning framework with three ingredients: a dataset $\supdata$, a function class $f_\theta$, and a loss function $\cL$. Compared to the Bayesian setting, FGMs don't assume an independent noise model, which may have little connection with the user specifications. 
Instead, they leverage the user's trust in the function class $f_\theta$ to be a good model of the mapping $x\mapsto y$. They go one step further and also entrust the uncertainty in that mapping to the same function class, which now also must fit individual mappings $x_i\mapsto y_i$. 

\paragraph{Encoding structure through the function class} FGMs draw their representational power from the function class $f_\theta$. Therefore, if the function class has a particular constraint, the FGM will have a corresponding constraint in probability space. For example, for the function class of linear functions, the expectation of $\cP(y|x)$ is also linear. 
Similarly, as shown in figure~\ref{fig:structured}, using FGMs with convolutional neural networks we can create meaningful, structured noise priors in image space. 
From graph neural networks and neural differential equations to probabilistic programs, FGMs leverage structured function maps to construct structured probability distributions.\vspace{-1mm}

\paragraph{Capturing any probability distribution} FGMs assume that $\cP(y|x) = \bP_{\theta\sim\cP(\theta)}\left[f_\theta(x)=y\right]$. As just described, this need not be arbitrarily expressive. However, for some arbitrarily expressive function classes, such as multi-layer perceptrons, their corresponding FGM can be shown to be arbitrarily expressive, in probability space. We formalize this in Theorem \ref{thm:1} below, proved in app.~\ref{app:universal}. Denote by  $\Bcal$   the Borel algebra. Define $ \mathrm{FGM}[\Fcal_\Theta,\Xcal]$ to be the set of all  probability measures $p$ on the measurable space   $(\Xcal \times \Ycal,  \Bcal(\Xcal \times \Ycal))$  such that the sampling process  $(x,y) \sim p$ is defined by FGM. 
\begin{theorem}[\textbf{Universal Distribution Theorem}] \label{thm:1}
Let   $l\ge 4$, $\Xcal=[0,1]^t$, $\Ycal=[0,1]^m$, and  $\Fcal_{\Theta}^{k,l}$ be a set of all functions represented by $l$-layer neural networks with sigmoidal activation and $k$ neurons per hidden layer. Let $q$ be a probability measure on  $(\Xcal \times \Ycal,  \Bcal(\Xcal \times \Ycal))$  such that $x\mapsto q(Y \in b_{y}|X=x)$ and  $\alpha\mapsto  q((1+\alpha)Y \in b_{y}|X=x)$ are continuous for $b_y \in \Bcal(\Ycal)$. Then, for any $\epsilon >0$, there exist $k \ge 1$ and $p \in \mathrm{FGM}[\Fcal_\Theta,\Xcal]$ such that $D_{\mathrm{TV}}[p, q] <\epsilon$, where $D_{\mathrm{TV}}$ is  the total variation distance.
\end{theorem}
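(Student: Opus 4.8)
The plan is to decouple the problem into matching the input marginal and matching the conditional law of $y$ given $x$, and then to realize that conditional as a pushforward of a fixed noise distribution through a network whose randomness is injected into its first-layer bias. Since we are free to choose any $p \in \mathrm{FGM}[\Fcal_\Theta,\Xcal]$, I would first set the input marginal of $p$ equal to that of $q$. With the marginals identical, the joint total variation collapses to the averaged conditional total variation, $D_{\mathrm{TV}}[p,q] = \bE_{x\sim q}\big[D_{\mathrm{TV}}(p(\cdot\mid x), q(\cdot\mid x))\big]$, so it suffices to make the conditional total variation small uniformly in $x$ (or outside an $x$-set of small $q$-measure). The hypothesis that $x\mapsto q(Y\in b_y\mid X=x)$ is continuous is exactly what lets me pass from a pointwise-in-$x$ construction to this integrated bound.

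For a fixed $x$ I would represent the target $q(\cdot\mid x)$ on $[0,1]^m$ as the law of $g(x,U)$, where $U$ is uniform on $[0,1]^m$ and $g(x,\cdot)$ is the conditional quantile (Rosenblatt) transform assembled from the one-dimensional conditional CDFs of $q(\cdot\mid x)$. Here the scaling hypothesis earns its keep: continuity of $\alpha\mapsto q((1+\alpha)Y\in b_y\mid X=x)$ at singletons $b_y$ forces the conditional to be (essentially) non-atomic, which is precisely the regularity that makes the quantile map $g$ continuous and lets me replace $q$ by a slightly regularized $q'$ with a continuous, strictly positive density at a total-variation cost below $\epsilon/3$.

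Next I would invoke the sigmoidal universal approximation theorem to approximate the continuous map $g\colon [0,1]^{t+m}\to[0,1]^m$ in sup norm up to $\delta$ by an $l$-layer network $N$ with $k$ neurons per hidden layer; the depth $l\ge 4$ is comfortably more than enough, with layers to spare. The embedding is then clean: writing the first layer of $N$ as $\sigma(Ax + Bu + c)$, I fix every weight and bias of the FGM network to the corresponding parameter of $N$ \emph{except} the first-layer bias, which I make random with the law of $BU+c$ for $U\sim\mathrm{Unif}([0,1]^m)$. Then $f_\theta(x)$ reproduces $N(x,U)$ exactly, so $p(\cdot\mid x)$ is precisely $\mathrm{law}(N(x,U))$, and $\cP(\theta)$ is the product of a Dirac mass on the fixed parameters with this bias distribution.

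The main obstacle is the last step: converting the sup-norm approximation $\|N-g\|_\infty\le\delta$ into a \emph{total-variation} bound between $\mathrm{law}(N(x,U))$ and $\mathrm{law}(g(x,U))=q'(\cdot\mid x)$. Uniform closeness of transport maps does not imply total-variation closeness of their pushforwards in general — this is exactly where total variation is unforgiving, and why the regularity extracted above is indispensable rather than cosmetic. I expect to control it through the continuous positive density of $q'(\cdot\mid x)$: a uniformly $\delta$-small perturbation of the transport moves the pushforward density by an amount governed by the density's modulus of continuity (which, being continuous and compactly supported, is $L^1$-continuous under both translation and the dilation appearing in the scaling hypothesis), and a Scheffé-type argument then upgrades this $L^1$/setwise control to the desired total-variation estimate. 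Collecting the $\epsilon/3$ regularization error, the approximation-to-total-variation error, and choosing $\delta$ small enough would yield $D_{\mathrm{TV}}[p,q]<\epsilon$.
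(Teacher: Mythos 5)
Your reduction to the conditional law (matching the $x$-marginals and then bounding the joint total variation by the conditional one) matches the paper's first step, and injecting randomness through a single layer's parameters is a legitimate way to produce an element of $\mathrm{FGM}$. But the step you yourself flag as ``the main obstacle'' is a genuine gap, and the repair you sketch does not close it. The sigmoidal universal approximation theorem hands you an $N$ with $\|N-g\|_\infty\le\delta$ but gives no control whatsoever over the derivatives of $N$, and the pushforward of $\mathrm{Unif}([0,1]^m)$ under $N$ is governed by those derivatives, not by the values of $N$. Concretely, a map within $\delta$ of $g$ in sup norm can be a (smoothed) staircase, whose pushforward is nearly atomic and therefore at total-variation distance essentially $1$ from any absolutely continuous target, no matter how small $\delta$ is. So ``sup-norm closeness plus a continuous positive target density'' cannot be upgraded to a total-variation bound by a Scheff\'e-type argument: you would need quantitative control of the modulus of continuity of $N$, or an extra independent smoothing noise added at the output, neither of which your construction provides. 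A secondary issue is the regularization step: replacing $q(\cdot\mid x)$ by a measure with a continuous strictly positive density costs at most $\epsilon/3$ in total variation only if $q(\cdot\mid x)$ is already absolutely continuous, and the stated hypotheses at most exclude atoms, not a singular continuous part.

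The paper's proof avoids all of this by never introducing a transport map. It uses the first three layers (built with sign activations, then approximated by steep sigmoids) to compute, deterministically, a one-hot indicator of which cell of a $1/r$-grid of $[0,1]^t$ contains $x$, and then makes the output-layer weight $w_{\varphi}$ attached to cell $\varphi$ random with law exactly $q(\cdot\mid X=x_\varphi)$. Then $f_\theta(x)=w_\varphi$ has literally the target conditional distribution at the grid point; continuity in $x$ controls the discretization error, and continuity in $\alpha$ absorbs the multiplicative error incurred when the sign activations are replaced by sigmoids. No density, no quantile transform, and no sup-norm-to-total-variation conversion is ever required.
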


\begin{figure}
    \centering
    \includegraphics[width=\linewidth]{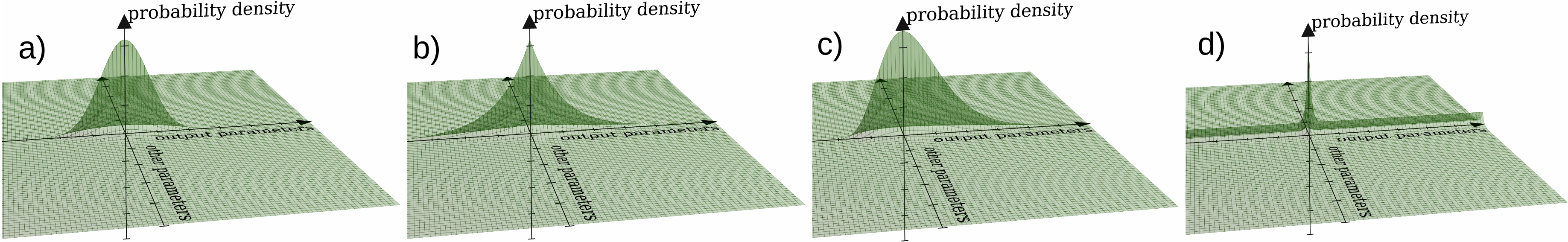}
    \caption[ERM with common losses can be shown to be subcases of FRM]{ERM with common losses is equivalent to maximum likelihood under an FGM that is only stochastic in the output parameters. The particular distribution depends on the loss: a) MSE with a Gaussian b) L1 with a Laplace c) cross-entropy with a Gumbel d) accuracy with a delta plus flat distribution. In practice, the axis for "other parameters" will often refer to thousands of parameters.\label{fig:common_manim}}
\end{figure}





\paragraph{Superseeding popular instances of ERM} In appendix~\ref{app:equivalent} we prove that ERM for four common objectives (MSE, L1 loss, accuracy and cross-entropy) can be seen as a subcase of maximum likelihood on an FGM where all the stochasticity is restricted to the 'output' parameters. Figure~\ref{fig:common_manim} provides a visual intuition on how empirical losses correspond to functional losses in output space.

\section{Functional risk minimization: learning in function space}
Now, we look at the supervised learning problem under the FGM assumption.~\looseness=-1

\subsection{Deriving FRM from \textit{expected} risk minimization}

We start with our goal to minimize the expected risk, impose the functional generative model assumption and do math manipulations. In the derivation, whenever we use $\cP(\theta)$ we refer to an unknown probability distribution entirely characterized by the data distribution $\cP(x,y)$ and function class $f$.
\vspace{2mm}

  \begin{minipage}{0.33\textwidth}
        \centering
    \includegraphics[width=\linewidth]{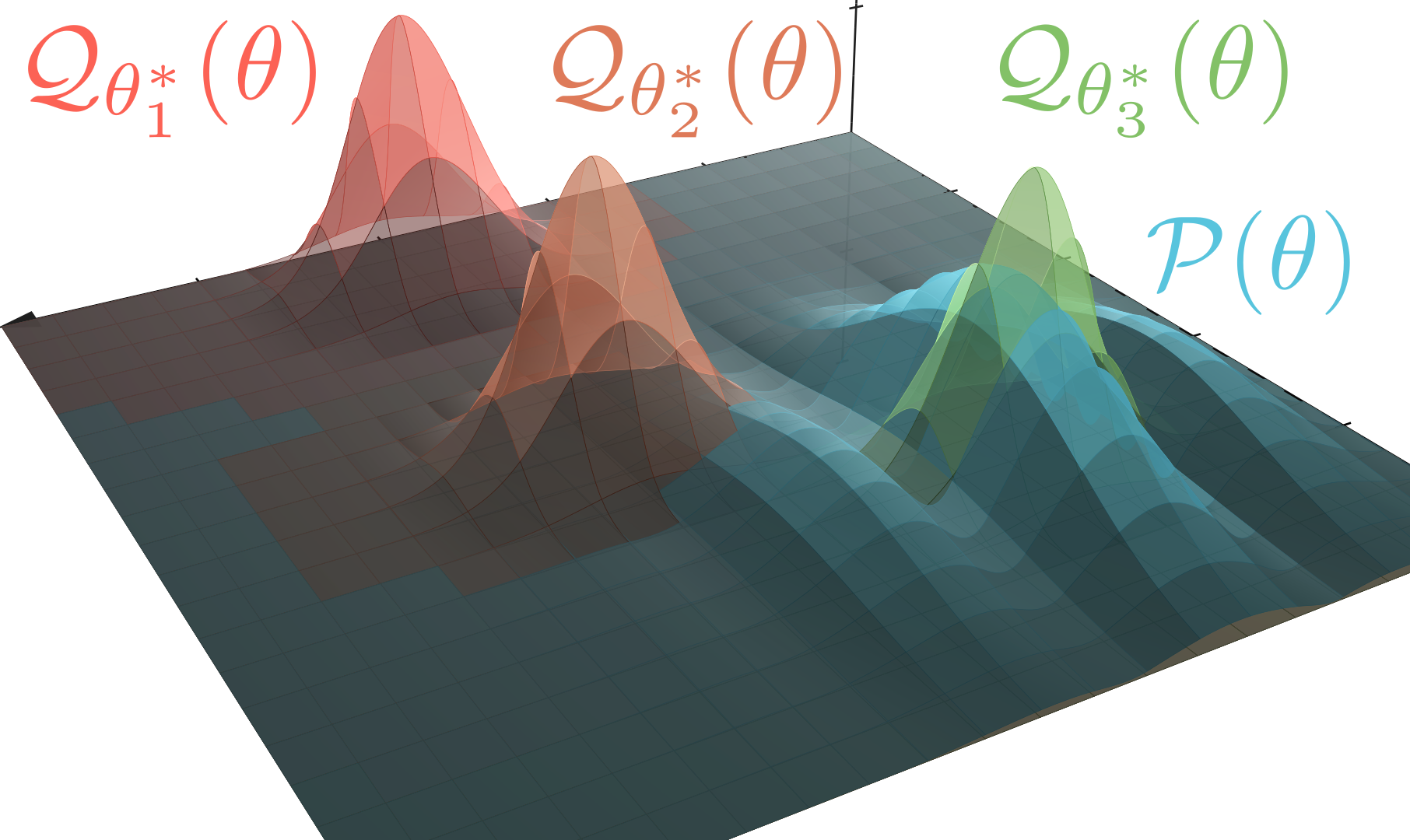}
  \captionof{figure}{Finding the projection of the unknown distribution $\color{cyan}\cP(\theta)$ to the family $\cQ_{\theta^*}(\theta)$ of probability distributions in function space. Here $\color{green} Q_{\theta_3^*}$ is best.}
    \label{fig:projection_cQ} 
  \end{minipage}
  \hfill
  \begin{minipage}{0.65\textwidth}
    \begin{align}
    \argmin{\theta^*} \bE_{x,y}\left[\cL(y,f_{\theta^*}(x)\right] &=\nonumber\\
    \argmin{\theta^*} \int_x\int_\theta\cL\left(f_\theta(x),f_{\theta^*}(x)\right) \cP(\theta)\cP(x)\hspace{1mm}d\theta \hspace{1mm}dx&=\nonumber\\ 
    \argmin{\theta^*} -\int_\theta\cP(\theta)\log{\left(e^{-\int_x\cL\left(f_\theta(x),f_{\theta^*}(x)\right) \cP(x)dx}\right)} d\theta&=\nonumber\\
    \argmin{\theta^*} -\int_\theta \cP(\theta) \log{\left(e^{-\bE_x\cL\left(f_\theta(x),f_{\theta^*}(x)\right)} \cdot \frac{Z(\theta^*)}{Z(\theta^*)}\right)} d\theta &=\nonumber\\
    \argmin{\theta^*} H\left(\cP(\theta),\frac{e^{-\bE_x\cL\left(f_\theta(x),f_{\theta^*}(x)\right) }}{Z(\theta^*)}\right) - \log\left(Z(\theta^*)\right) &=\nonumber\\
    \argmin{\theta^*} H\left(\cP(\theta), \cQ_{\theta^*}(\theta)\right) - \log\left(Z(\theta^*)\right)&.~\label{eq:want_to_optimize}
\end{align}
  \end{minipage}

with $H(\cP,\cQ)$ being the H cross-entropy operator and $\cQ_{\theta^*}(\theta) = e^{-\bE_x\cL\left(f_{\theta}(x),f_{\theta^*}(x)\right)}/Z(\theta^*) $,  $Z(\theta^*) = \int_{\theta}  e^{-\bE_x\cL\left(f_{\theta}(x),f_{\theta^*}(x)\right)} d\theta$ being a class of probability distributions and their normalizers.~\looseness=-1

To gain some intuition, we first observe that the second term $-\log Z(\theta^*)=\log 1/Z(\theta^*) = \log 1/ \left(\int_{\theta}  e^{-\bE_x\cL\left(f_{\theta}(x),f_{\theta^*}(x)\right)} d\theta\right)$ is a label-independent regularizer that penalizes $\theta^*$ leading to small $\int_\theta e^{-\bE_x\cL\left(f_{\theta}(x),f_{\theta^*}(x)\right)} d\theta$; i.e. a sharp distribution. Now, we can see that the first term is encouraging the matching of two probability distributions in function space: 

\begin{enumerate}
    \item $\cP(\theta)$: the unknown data-dependent distribution, which does \textit{not} depend on the loss function $\cL$. This target distribution is defined entirely by the model class $f$ and the unknown data distribution $\cP(x,y)$, which we will have to estimate from the training data. 
    \item $\cQ_{\theta^*}(\theta)$: a class of probability distributions which depends on the loss function $\cL$ and the $\theta^*$ used to make predictions, but not on the labels. This approximating distribution makes a parameter $\theta$ more likely the closer the function $f_\theta$ is to $f_{\theta^*}$ according to the problem-specified loss $\cL$. Intuitively, it is a gaussian-like distribution centered at $\theta^*$, with a metric that captures the differences in task space. This will be formalized in section~\ref{subsec:approximation}. 
\end{enumerate}


This equation also shows that we need \textit{not} know the exact shape and distribution of $\cP(\theta)$, which could be very complex without further assumptions. We only need to know its 'projection' to a particular class of probability distributions defined by the task at hand. This also happens in ERM-based learning: to minimize MSE you only need to know the expectation $\mathbb{E}[y|x]$, not the entire shape of $\cP(y|x)$.

We would like to optimize equation~\ref{eq:want_to_optimize}, and thus the cross-entropy $H\left(\cP(\theta), \cQ_{\theta^*}(\theta)\right)$, but we do not have access to samples for $\cP(\theta)$, we only have $(x,y)$ pairs. We approximate this by instead computing the cross-entropy on $\cP(y|x)$ following the functional generative model. 
Thus, for a given dataset $\dtrain=\supdata$ \textbf{the FRM objective} is: 
\begin{equation}
    \argmax{\theta^*} \hspace{1mm} \sum_{(x_i,y_i)}\log \int_{\theta_i:f_{\theta_i}(x_i)=y_i} e^{-\bE_{x}\left[\cL(f_{\theta_i}(x),f_{\theta^*}(x)\right]} \hspace{1mm}d\theta_i -\log Z(\theta^*)
    \label{eq:FRM_objective}
\end{equation}
This training objective involves an integral in high dimensions under a non-linear constraint, which is computationally challenging. Next, we propose two tractable approximations: 1) a variational approach for language models and 2) a Taylor approximation for models which have at least hundreds of parameters, which is the most common case in deep learning.

\subsection{FRM variational lower-bound}~\label{subsec:variational}
An effective approach to approximating high-dimensional integrals is the variational method~\citep{kingma2013auto}. There, we use a simple function class to approximate an intractable posterior. In our case, the posterior is $p(\theta_i|(x_i, y_i)) = \BOne{f_{\theta_i}(x_i)=y_i}\frac{e^{-\delta_\cL(\theta_i,\theta^*)}}{\int_{\theta_i:f_{\theta_i}(x_i)=y_i} e^{-\delta_\cL(\theta_i,\theta^*)} d\theta_i}$, with $\delta_\cL(\theta_i, \theta^*) = \bE_{x}\left[\cL(f_{\theta_i}(x),f_{\theta^*}(x)\right]$. 
We would like to learn an approximation to $q_\phi(\cdot|x_i, y_i) \approx p(\theta_i|(x_i, y_i))$, but 1) the term $\BOne{f_{\theta_i}(x_i)=y_i}$ forces all samples $\theta_i$ from $q$ to perfectly satisfy the point, and 2) the output space is as large as the parameter space, which can be up to billions of dimensions.

We note, however, that both issues can be addressed for large language models~(LLMs), a compelling subclass of modern ML. First, we model the distribution of biases in the last layer as an iid Gumbel distribution. This transforms the constrained integral into a log-likelihood of a softmax. As proved in app.~\ref{app:equivalent}, if we were to make the rest of the parameters fixed, this would recover the ERM cross-entropy objective used in modern LLMs.

Modeling the uncertainty of all the parameters except the last bias is hard because there can be billions of them. However, large language models can effectively parameterize different functions with a large frozen core and small modifications like pre-pended input \textit{prompts}~\citep{brown2020language} or low-rank adaptations~\citep{hu2021lora}. This suggests we can model the functional uncertainty in adaptation space, which is computationally tractable. This results in the form $\theta\sim \left[ \theta_a \sim \cP_z(\cdot|\theta_a^*), \theta_{\setminus\left\{a,b\right\} }^*, \theta_b\sim G(\cdot)\right]$.

Notably, when the adaptation is a prompt, this approximation is closely connected to classic VAEs, except the prior $\cP(\theta_a|\theta_a^*)\propto e^{-\delta_{\cL}([\theta^*_{\setminus{a}}, \theta_a], [\theta^*_{\setminus{a}}, \theta_a^*)]}$  is learned and structured, instead of a standard Gaussian.

\subsection{Approximating the FRM objective by leveraging over-parameterization}
\label{subsec:approximation}

Over-parameterization allows a reasonable approximation to eq.\ref{eq:FRM_objective}. The system $(\theta^*,\theta_1,\dots,\theta_n)$ is over-parameterized for the $n$ independent constraints $f_{\theta_i}(x_i)=y_i$ of fitting a single data-point $(x_i,y_i)$ with the entire parameter set $\theta_i$. This is true even for a constant model $f_c(x)=c$, with $c_i=y_i$. Furthermore, even small models (e.g. $10^4$ parameters) may be underparameterized for the entire dataset, but extremely over-parameterized to fit a single point. Therefore, similar to the Neural Tangent Kernel literature~\citep{jacot2018neural}, we assume that a very small perturbation will be enough to fit each datapoint~\footnote{Note that this justifies that there is a large probability mass for $|\theta_i-\theta|<<1$, but it does not justify that this is an accurate approximation of the entire integral. However, this is a common and useful approximation.}.~\looseness=-1

We analyze small perturbations $\Delta_i$ around a $\theta^*$ for $|\Delta_i|<<1$ and take a Laplace approximation of our probability distribution, modeling it as a Gaussian: $\cN(\theta^*,H^{-1}_{f,\cL,\theta}),(H_{f,\cL,\theta^*})_{j,k}:=\frac{\partial^2\bE_x\left[\cL\left(f_{\theta+\Delta}(x),f_\theta\right)\right]}{\partial\Delta_j\partial\Delta_k}$. Similarly, we can take the first-order Taylor approximation of the function $f_{\theta^*+\Delta}(x_i) \approx f_{\theta^*}(x_i) + J_{\theta^*} f_{\theta^*}(x_i)^T\cdot \Delta$, assuming it is linear. Omitting the normalizer term we get:~\looseness=-1

\vspace{-3mm}
\begin{equation}
    \small
    \argmax{\theta^*} \hspace{0mm} \sum_{(x_i,y_i)}\log\hspace{-2mm}\int\limits_{\Delta_i:f_\theta(x_i) + J_\theta f_\theta(x_i)^T\cdot \Delta_i=y_i} \hspace{-5mm}\frac{e^{-\Delta_i^T\cdot H_{f,\cL,\theta^*}\Delta_i}}{Z(\theta^*)} \hspace{0mm}d\Delta_i.
    \label{eq:Taylor_FRM_objective}
\end{equation}
\vspace{-3mm}

Under these conditions, computing the likelihood of $f_{\theta+\Delta_i}$ fitting $x_i$ involves integrating a gaussian distribution over either a subspace (for regression) or a half-space (for binary classification).~\looseness=-1

\paragraph{Regression}
We first note that the integral of the gaussian under a constraint can be seen as the pdf of $y_i\sim J_\theta f_\theta(x_i)\cdot\Delta+f_\theta(x_i),\Delta\sim\cN(0,H^{-1}_{f,\cL})$ which can also be expressed as a gaussian: $p(y_i)\sim\cN\left(f_\theta(x_i), J_i^TH^{-1}_{f,\cL}J_i\right)$, $J_i:=J_\theta f_\theta(x_i)$. 
Computing its log-likelihood we obtain:

\vspace{-5mm}

\begin{equation}
  \argmin{\theta^\star}\sum_{i=1}^n \left(y_i-f_{\theta^\star}(x_i)\right)^T\left(J_i^TH^{-1}_{f,\cL}J_i\right)^{-1}\left(y_i-f_{\theta^\star}(x_i)\right) + \sum_{i=1}^n \log{\left(|J_i^TH^{-1}_{f,\cL}J_i|\right)}  
\end{equation}

Note that this is more than weighted-MSE because both $J_i$ and $H_{f,\cL}$ depend on $\theta$. This is the training objective used in our experiments.

\paragraph{Classification}
For binary classification the solution is similar, except that we integrate over a half-space instead of a hyper-plane. 
Therefore, to maximize the logprobability of a function fitting a point, we minimize the gaussian logcdf of the signed distance function to the decision boundary: $\min_\theta \sum_{i=1}^n {\rm logcdf}\left(\Delta_i\right)$
where $\Delta_i := \sgn{\left(\sigma(f_{\theta}(x_i))_{y_i}-\frac{1}{2}\right)}\min_{\theta_i:\sigma(f_{\theta_i}(x_i))_{y_i}=\frac{1}{2}}|\theta_i-\theta|_{\Sigma_{f,\cL}}$ is the signed distance to the decision boundary.
Note that in classification the best perturbation is \textit{not} zero, but a very negative (i.e. opposite to the gradient) value, since this implies that the parameter $\theta$ is well within the correct classification region.

For multi-class classification the integral is over an intersection of $C-1$ half-spaces (comparing each class with the correct class $y_i$). The efficient integration in that case is still an active area of research~\citep{gessner2020integrals}. Two potential alternatives may be practical: turning the training of an $n$-way classification into $n$ binary classifications, and linearizing the softmax of all incorrect classes jointly instead of linearizing each one independently.

\subsection{FRM may do explictly what over-parameterized ERM does implicitly}~\label{subsec:overparam}
\vspace{-5mm}

\begin{wrapfigure}[10]{r}{0.35\linewidth}
\vspace{-9mm}
  \begin{center}
    \includegraphics[width=\linewidth]{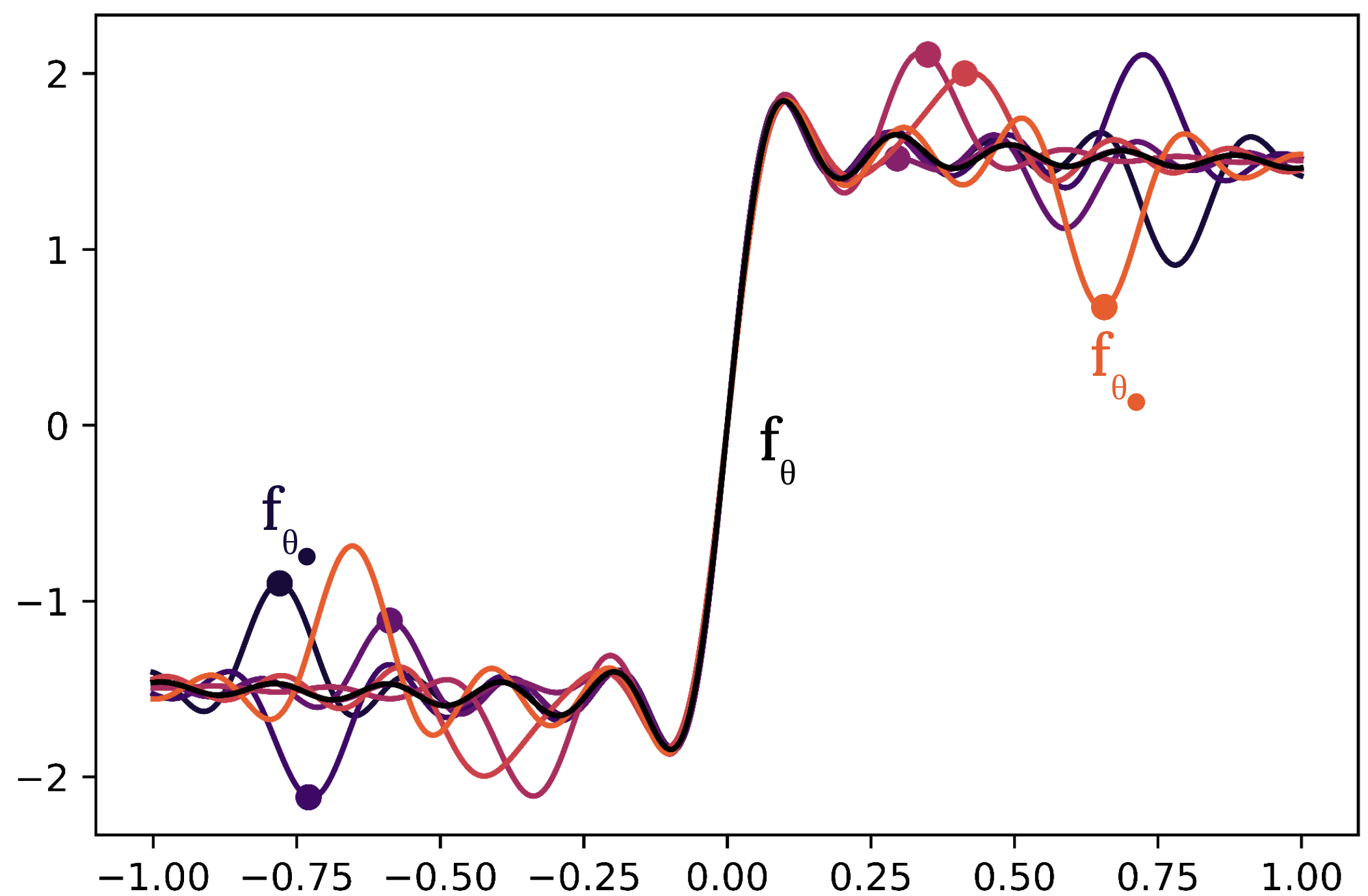}
  \end{center}
  \vspace{-3mm}
  \caption{Minimal functional adaptations with Fourier features.\hspace{-1mm}~\looseness=-1}
    \label{fig:minimal_adaptation}
\end{wrapfigure}

It has been observed that neural networks often generalize despite memorizing the training dataset~\citep{zhang2016understanding,poggio2017theory,belkin2019reconciling,nakkiran2021deep}, seemingly contradicting classic understanding of generalization in ERM, which relies on controlled capacity. ~\looseness=-1

FRM implicitly assigns to every datapoint $(x_i, y_i)$ its own latent model $f_{\theta_i}$ which fits it: $f_{\theta_\textbf{i}}(x_i)=y_i$. In this way, we can turn a model $f_\theta$ into an over-parameterized hyper-model. 
Although $\theta_i$ is unobserved in FGMs, the previous Taylor version of FRM becomes equivalent to this optimization:

\begin{equation}
\min_{\substack{\theta_1,\dots,\theta_n:\\f_{\theta_i}(x_i)=y_i}} \hspace{-1mm} \sum_{i,j} |\theta_i-\theta_j|^2_{\cM_{f,\cL,\theta}} = \min_\theta  \hspace{-1mm}\sum_i  \hspace{-1mm}\min_{\substack{\theta_i:\\f_{\theta_i}(x_i)=y_i}}  \hspace{-3mm}|\theta_i-\theta|^2_{\cM_{f,\cL,\theta}}     
\end{equation}
where explicit $\theta_i$ are sought that are as close as possible according to the metric $\cM$.
Whereas ERM finds the function that best fits the data among a class of simple functions, FRM finds the simplest hyper-model to explain the data, related to the principle of Occam's Razor.

This can be seen as finding the simplest hyper-model $\{\theta_1,\dots,\theta_n\}$ that fits the data. Simplicity is measured as the distance of parameters being close to a central parameter given a metric that captures the relationship between the function class $f_\theta$ and the loss $\cL$. This encourages each independent function to be close to the central one, and thus all functions being close to each other, as shown in figure~\ref{fig:minimal_adaptation}.  This is related to the line of research analyzed by~\citet{bartlett2021deep}, which conjectures that ERM under gradient descent may \textit{implicitly} find a function with two components $f_{\rm stable}+f_{\rm spiky}$, such that the spiky component has negligible norm but allows overfitting. In this regard, FRM can be seen as \textit{explicilty} searching for the smallest necessary perturbation for each point.

\section{Experiments}~\label{sec:experiments}
To scale to neural networks, we leveraged the Taylor approximation in section~\ref{subsec:approximation}. However, that requires inverting a Hessian, often too big to even instantiate in memory. We bypassed this problem by 1) relying on iterative solvers to avoid the cubic cost and 2) materializing only Hessian-vector products. To do so, we use JAX~\citep{jax2018github} and the jaxopt package~\citep{jaxopt_implicit_diff}, which implements implicit gradients.
\vspace{-2mm}
\subsection{Linear least squares}
\begin{figure}
    \centering
    \includegraphics[width=\linewidth]{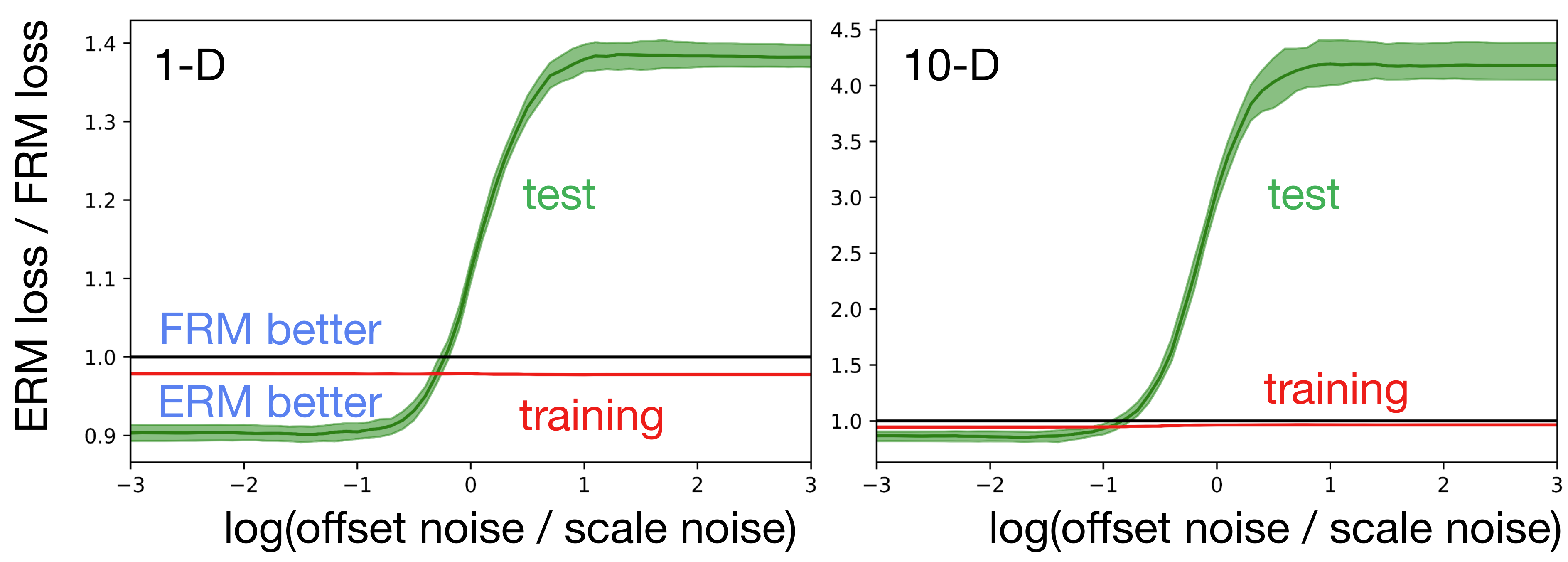}
    \vspace{-2mm}
    \caption[FRM vs. ERM in linear regression]{Ratio of errors between ERM and FRM as a function of the noise distribution for 1-D and 10-D linear regression. As expected, ERM has slightly lower test loss ($12\%$) under ideal gaussian noise. With heteroscedastic noise, ERM has up to $40\%$ higher test error. In 10 dimensions, the advantage of FRM is even starker: ERM has 4 times more test error than FRM, despite lower training error.~\looseness=-1\vspace{-5mm}}
    \label{fig:LLS}
    \vspace{-5mm}
\end{figure}
We analyze linear regression under mean-squared error risk with a $d$-dimensional input and a one-dimensional output. 
The classic ERM solution 
minimizes the risk on the training data: $\min_{\lambda,\beta} \sum_{(x_i, y_i)} \left(\lambda\cdot x_i+\beta-y_i\right)^2$. 
This is equal to maximum likelihood on gaussian noise on the bias $\beta$. Thus, we expect ERM to do well in this situation, but not necessarily otherwise.

\begin{wrapfigure}{r}{0.37\textwidth}
  \vspace{-5mm}
  \centering
  \includegraphics[width=\linewidth]{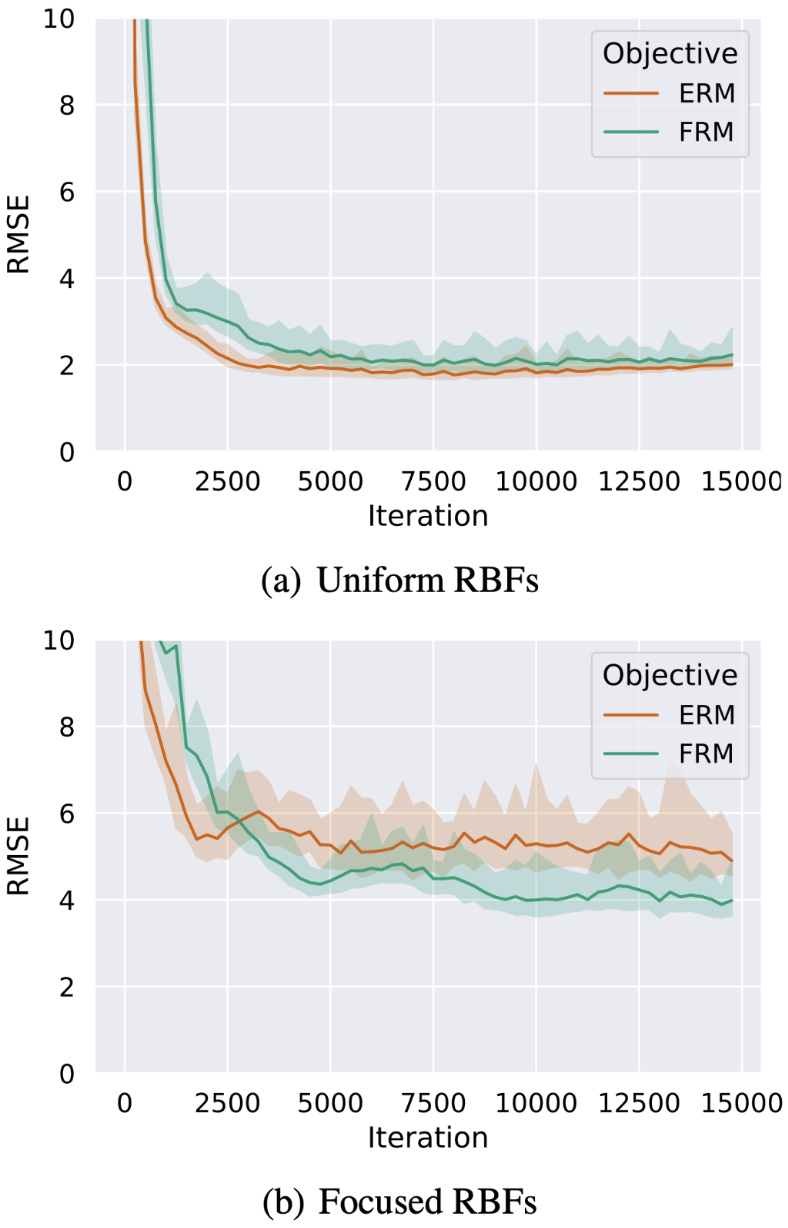}
  \caption{RMSE for the learned value function in mountain car using a TD loss with (a) uniform and (b) focused features. 
  Solid lines are the average over 20 seeds; shaded areas show the 95th percentile interval.
    \label{fig:rl}\vspace{-17mm}}
\end{wrapfigure}

For FRM linear regression with MSE, the approximations  in section~\ref{subsec:approximation} are exact and both Hessian and gradients are independent of the parameters, simplifying the objective to: $\min_{\lambda,\beta} \sum_{(\textbf{x}_i, y_i)} \frac{\left(\lambda\cdot \textbf{x}_i+\beta-y_i\right)^2}{\left[\textbf{x}_i, 1 \right] H^{-1} \left[\textbf{x}_i, 1\right]^T}$, with $H=\bE_{\textbf{x}}[[\textbf{x},1]^T[\textbf{x},1]]$.

Figure~\ref{fig:LLS} shows that indeed ERM does slightly better with gaussian noise in the bias, but FRM does much better when the noise is entirely in the slope. We also observe that the FRM is more than 4 times better in higher dimensions.

\vspace{-3mm}
\subsection{Value function estimation}

We illustrate the broad applicability of our approach with an offline value estimation task in the mountain car domain~\citep{sutton2018reinforcement}. We aim to learn a linear value function via a $15\times 15$ RBF grid using 1-step TD error~\citep{sutton1988learning} as the loss function with transitions from a near-optimal policy. Optimized using stochastic gradient descent, both methods used a batch size of 256 and a learning rate determined by grid search. Performance was evaluated by the RMSE between predictions and true values on unseen samples.~\looseness=-1

We consider two different arrangements of RBFs, a uniform layout and one that is denser towards the center of the environment. Note that although the true value function has a discontinuity spiraling out from the center, which might benefit from finer resolution, the more poorly conditioned nature of this non-uniform arrangement of features makes the problem harder, as can be seen in figure~\ref{fig:rl}. We see that FRM is competitive in the easier of the two cases while outperforming ERM by over 20\% in the harder one. We hypothesize that TD loss is commonly subject to complex noise that can hinder ERM when its features are poorly aligned. Furthermore, due to \textit{bootstrapping} ($\cL(s,r,s'):=(f_\theta(s)-r-\gamma f_\theta(s'))^2$) the temporal difference error is inherently functional through the term $f_\theta(s')$ affecting the label.

\subsection{FGM-based VAE finds better representations within structured variations}
To better understand when FRM works better than ERM, we build a Variational AutoEncoder(VAE) on top of MNIST~\citep{lecun1998gradient} and combinations of two popular variations: colored MNIST~\citep{arjovsky2019invariant} and translated MNIST~\citep{jaderberg2015spatial}. We build a vanilla VAE with MLP encoder and CNN decoder. Then, we evaluate the quality of the representation to do classification for the vanilla VAE and an FGM-based decoder where noise is modeled in function space. For FGM, we train a small MLP on top of the latent representation, with a stop-gradient, and measure accuracy depending on the size of the latent.

We see that in MNIST, where natural variations in orientation, translation, and color have been unnaturally removed, some gains exist but are small. In the datasets containing variations in color or translation, the FRM gains are substantial. This is because noise in CNN weights can easily explain these structured variations, as shown in figure~\ref{fig:structured}. Similarly, papers such as Deep Image Prior~\citep{ulyanov2018deep} have argued that neural networks are good models for real-world variability, making FRM particularly appealing for modeling real-world data. Results are shown in figure~\ref{fig:VAE_MNIST}.

\begin{figure}
    \centering
    \includegraphics[width=\linewidth]{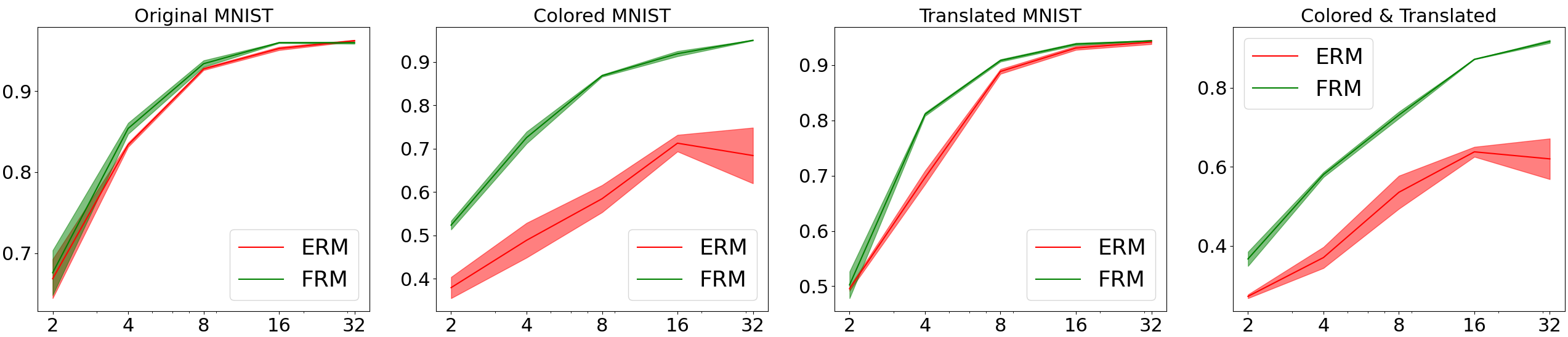}
    \caption{ Accuracies of an MLP trained from the latent space of two CNN-based VAEs, using ERM and FRM. Training with FRM improves performance over ERM in vanilla MNIST and all its variants. For the three variants, which have large, natural, structured variability, FRM provides large gains.\vspace{-5mm}}
    \label{fig:VAE_MNIST}
    \vspace{-1mm}
\end{figure}

\vspace{-3mm}
\section{Conclusion}
Machine Learning  has changed significantly since its inception: modern neural networks are often over-parameterized and datasets are now internet-scale. 
We propose FRM, an alternative framework to ERM, that is better suited for the current ML regime.
FRM proposes to measure losses in function space rather than output space. By improving on one of the foundations of ML, FRM can be applied to a wide range of settings. We show this through experiments in supervised, unsupervised, and reinforcement learning.

The main limitation of FRM in its general form is its compute cost. Thanks to the Taylor approximations proposed in sections~\ref{subsec:approximation} and~\ref{sec:experiments} we can already run FRM with mid-sized neural networks. 
This is because FRM requires evaluating a product between the Hessian-inverse and a Jacobian, which is an order of magnitude more expensive than a simple forward pass.
However, FRM could become orders of magnitude more efficient than ERM-based approaches. As explained in section~\ref{subsec:overparam}, under-parameterized FRM behaves similarly to over-parameterized ERM by making models have $n$ times more parameters $\theta_1,\dots,\theta_n$, one copy per training example. There, each $\theta_i$ is instantiated on the fly for loss computation and thus doesn't need to be in memory, which is crucial for modern datasets with millions of datapoints. Furtheremore, customizations to particular models such as the variational approximation for LLMs explored in section~\ref{subsec:variational} can retain the benefit of functional losses while remaining scalable.

In the last few years, there has been a clear tendency towards building large models capable of performing many tasks which were previously modeled using separate functions. FGMs propose the natural step to model the diversity in these datasets in function space rather than in output space. This enables richer and more meaningful noise models. Despite noise being pervasive across real-world data, modern deep learning approaches still use simple loss functions. As we also keep moving towards larger, more varied datasets, properly modeling the internal data diversity becomes crucial. We believe FRM opens a new research path towards an effective solution in this modern ML regime.
\newpage
\subsection*{Acknowledgments}
We would first like to thank Alexander Rakhlin for multiple insightful conversations on the work and for useful feedback. We would also like to thank Anurag Ajay, Peter Battaglia, Rohan Chitnis, Xavier Fern\'andez-Real, Javier Lopez-Contreras, Jiayuan Mao, Manon Revel, David Sontag, Oriol Vinyals, and Luca Zappella for their insightful comments and Maria Bauza for multiple conversations and providing great feedback on the draft.~\looseness=-1

\bibliography{main}
\bibliographystyle{tmlr}

\newpage
\appendix
\onecolumn

\appendix

\section{Proof of the Universal Distribution Theorem}~\label{app:universal}
\begin{theorem}[\textbf{Universal Distribution Theorem}]
Let   $l\ge 4$, $\Xcal=[0,1]^t$, $\Ycal=[0,1]^m$, and  $\Fcal_{\Theta}^{k,l}$ be a set of all functions represented by $l$-layer neural networks with sigmoidal activation and $k$ neurons per hidden layer. Let $q$ be a probability measure on  $(\Xcal \times \Ycal,  \Bcal(\Xcal \times \Ycal))$  such that $x\mapsto q(Y \in b_{y}|X=x)$ and  $\alpha\mapsto  q((1+\alpha)Y \in b_{y}|X=x)$ are continuous for $b_y \in \Bcal(\Ycal)$. Then, for any $\epsilon >0$, there exist $k \ge 1$ and $p \in \mathrm{FGM}[\Fcal_\Theta,\Xcal]$ such that $D_{\mathrm{TV}}[p, q] <\epsilon$, where $D_{\mathrm{TV}}$ is  the total variation distance.
\end{theorem}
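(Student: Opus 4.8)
The plan is to prove a \emph{conditional} universal-approximation statement: since the $X$-marginal of an FGM may be taken equal to the target marginal $q_X$, it suffices to build, for $q_X$-almost every $x$, a neural pushforward whose law approximates the conditional $q(\cdot\mid X=x)$, and then to use the decomposition $D_{\mathrm{TV}}[p,q]=\int D_{\mathrm{TV}}[p(\cdot\mid x),q(\cdot\mid x)]\,dq_X(x)$. The central object to construct is a single ``generator'' $G(x,u)$, with $u$ a uniform noise variable, such that for each fixed $x$ the pushforward $G(x,\cdot)_\#\mathrm{Unif}$ approximates $q(\cdot\mid x)$; once $G$ is realized by an $l$-layer sigmoidal network with the noise $u$ injected through first-layer biases, the induced law over the random parameters is, by construction, an element of $\mathrm{FGM}[\Fcal_\Theta,\Xcal]$. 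Concretely, writing $f_{\theta(u)}(x)$ for the network obtained by shifting the first-layer biases by $u$ (so that the first hidden layer sees an affine function of $(x,u)$ exactly as a standard net on the augmented input), one gets $\cP(\theta)=\mathrm{Law}(\theta(u))$ with $u\sim\mathrm{Unif}$, and each $f_{\theta(u)}$ remains an admissible $l$-layer sigmoidal network.

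First I would reduce to generating a fixed, well-conditioned conditional. Via the inverse-transform (noise-outsourcing) representation together with continuity of $x\mapsto q(\cdot\mid x)$ (the first hypothesis), I would produce a generator depending jointly continuously on $(x,u)$. The obstacle here is that the raw conditionals $q(\cdot\mid x)$ may be singular or atomic, so no continuous pushforward of uniform noise can match them in total variation. This is exactly where the second hypothesis---continuity of $\alpha\mapsto q((1+\alpha)Y\in b_y\mid X=x)$---is used: multiplying $Y$ by a small random factor $1+A$ with $A$ supported on $[-\eta,0]$ keeps outputs inside $[0,1]^m$ and, by this hypothesis together with dominated convergence, produces a smoothed conditional $\bar q_\eta(\cdot\mid x)$ with $D_{\mathrm{TV}}[\bar q_\eta(\cdot\mid x),q(\cdot\mid x)]\to0$ as $\eta\to0$, while endowing $\bar q_\eta$ with enough regularity (a nondegenerate absolutely continuous radial component) that its transport map from uniform noise is $C^1$ with Jacobian bounded away from zero.

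Next I would invoke universal approximation for deep sigmoidal networks. Because the sigmoid is smooth, a network of depth $l\ge4$ (enough to absorb the bias-injection layer followed by a standard approximation subnetwork) and sufficiently large width $k$ can approximate the smoothed generator $G_\eta(x,\cdot)$ \emph{together with its Jacobian in $u$} uniformly on the compact cube. The reason for approximating the derivative, and not merely the map, is that uniform $C^1$-closeness of the transport maps forces the pushforward densities to converge in $L^1$, hence in total variation, whereas a sup-norm approximation of the map alone would control only a Wasserstein distance, which does not bound $D_{\mathrm{TV}}$.

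Finally I would assemble the error budget: choose $\eta$ so the smoothing costs at most $\epsilon/3$ in TV (second hypothesis); choose the network so the residual density error is at most $\epsilon/3$ ($C^1$ approximation); and, if the construction discretizes $\Xcal$, fix its mesh through continuity in $x$ (first hypothesis) so the $x$-dependence error is at most $\epsilon/3$; summing and integrating against $q_X$ yields $D_{\mathrm{TV}}[p,q]<\epsilon$. I expect the principal difficulty to be precisely the passage from function approximation to total-variation closeness---guaranteeing that the constructed conditional is absolutely continuous and that the network controls the relevant Jacobian---for which the multiplicative-smoothing hypothesis is indispensable, and the secondary bookkeeping to be verifying that injecting the noise through the parameters leaves $f_\theta$ an admissible element of $\Fcal_\Theta^{k,l}$.
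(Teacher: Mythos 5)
There is a genuine gap at the core of your construction. By realizing each approximating conditional $p(\cdot\mid x)$ as the pushforward of uniform noise through a $C^1$ generator whose Jacobian is bounded away from zero, and by arguing total-variation closeness via $L^1$ convergence of \emph{densities}, you force $p(\cdot\mid x)$ to be absolutely continuous with respect to Lebesgue measure on $\mathcal{Y}$. The theorem's hypotheses do not make the target conditionals absolutely continuous, nor even approximable in total variation by absolutely continuous measures. Concretely, for $m\ge 2$ let $q(\cdot\mid x)$ be, for every $x$, the uniform measure on the diagonal segment $\{(s,\dots,s):s\in[0,1]\}$. The first hypothesis holds trivially (no $x$-dependence), and the second holds because, writing $\pi(b_y)=\{s:(s,\dots,s)\in b_y\}$, one has $q((1+\alpha)Y\in b_y\mid X=x)=\tfrac{1}{1+\alpha}\,\mathrm{Leb}\bigl(\pi(b_y)\cap[0,1+\alpha]\bigr)$, which is continuous in $\alpha$. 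Yet this conditional is at total-variation distance $1$ from every absolutely continuous measure on $[0,1]^m$, so no generator of the kind you describe can come within $\epsilon$ of it. Your multiplicative smoothing cannot repair this: multiplying $Y$ by $(1+A)$ spreads mass only radially, so an angularly singular conditional stays singular; and even where smoothing is relevant, the hypothesis gives continuity of $\alpha\mapsto q((1+\alpha)Y\in b_y\mid X=x)$ for each \emph{fixed} $b_y$ — i.e.\ setwise convergence — which does not deliver the uniform-over-$b_y$ control needed to conclude $D_{\mathrm{TV}}[\bar q_\eta(\cdot\mid x),q(\cdot\mid x)]\to 0$ (a point mass and its radial smearing remain at TV distance $1$ for every $\eta>0$).

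The paper avoids this entirely by never generating the conditional from noise. It builds a network whose first three hidden layers deterministically partition $[0,1]^t$ into $r^t$ cells, so that on the cell containing $x$ the output equals the last-layer weight vector $w_\varphi$; the only randomness is placed on these output weights, which are assigned the law $q(\cdot\mid X=x_\varphi)$ \emph{verbatim} — an arbitrary probability measure, singular or not, is admissible as part of $\mathcal{P}(\theta)$. The conditional of the resulting FGM is then exactly $q(\cdot\mid X=x_\varphi)$, and the two continuity hypotheses are spent only on the cell-discretization error in $x$ and on the multiplicative factor $(1+\epsilon_2)$ incurred when sign activations are replaced by steep sigmoids. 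Your outer skeleton (matching the $X$-marginal, disintegrating $D_{\mathrm{TV}}$, budgeting errors over a discretization of $\mathcal{X}$) agrees with the paper's, but to close the gap you would need to move the randomness into the parameters in a way that can carry the raw conditional law directly — e.g.\ onto the output-layer weights as the paper does — rather than funneling it through a continuous pushforward of uniform noise.
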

\begin{proof}
  Let  
  $q$ be given and  $\epsilon>0$. For any $(b_{x},b_{y})\in  \Bcal(\Xcal \times \Ycal)$, since $p(X\in b_x,Y \in b_y) = p(X\in b_x)p(Y \in b_y|X\in b_x)$, we choose $ p(X\in b_x)=q(X\in b_x)$ for all $b_{x}$. With this, we have that  
\begin{align*}
D_{\mathrm{TV}}[p, q] &=\sup_{(b_{x},b_{y})\in  \Bcal(\Xcal \times \Ycal)}|p(X\in b_x,Y \in b_y)-q(X\in b_x,Y \in b_y)|
\\ & =\sup_{(b_{x},b_{y})\in  \Bcal(\Xcal \times \Ycal)} q(X\in b_x)\left|p(Y \in  b_{y}|X\in b_x)-q(Y \in  b_{y}|X\in b_x)\right|  
\\ & \le \sup_{b_{y}\in  \Bcal( \Ycal), x\in \Xcal} \left|p(Y \in  b_{y}|X=x)-q(Y \in  b_{y}|X=x)\right|.    
\end{align*}
Since $p \in \mathrm{FGM}[\Fcal_\Theta,\Xcal]$, we have that $p(Y \in  b_{y}|X=x)=\PP_{\theta \sim p(\theta)}[f_{\theta}(x)\in  b_{y}]$. Thus, 
$$
D_{\mathrm{TV}}[p, q] \le\sup_{b_{y}\in  \Bcal( \Ycal), x\in \Xcal} \left|\PP_{\theta \sim p(\theta)}[f_{\theta}(x)\in  b_{y}]-q(Y \in  b_{y}|X=x)\right|. 
$$ 
Thus, in the rest of the proof, we want to choose $f_{\theta}$ and $p(\theta)$ such that  $\left|\PP_{\theta \sim p(\theta)}[f_{\theta}(x)\in  b_{y}]-q(Y \in  b_{y}|X=x)\right|$ is arbitrarily small.
We achieve this in two steps. In the first step, we use the sign function (i.e., signum function) as activation function.
In the second step, we approximate the sign function with the sigmoid function and analyze  the effect of the approximation. 

We now start the first step (with the sign activation function). 

Define $c_{j}=j/r$ and $c'_{j}=(j+1)/r$ for $j=0,1,2\dots,r$ for some $r$. The first hidden layer uses deterministic weights and biases so that each neuron of the first layer implements the functions $\one\{x_i \ge c_{j}\}$ for each coordinate $i =1,2,\dots,n$ of $x$ and each threshold  $c_{j}$ with $j=0,1,2,\dots,r$. This uses $(r+1)n$ neurons in total. We use  additional  $(r+1)n$ neurons so that similarly, each neuron
 implements the functions $\one\{x_i <  c_{j}'\}$ for each coordinate $i $ and each threshold $j$. The first hidden layer uses $2(r+1)n$ neurons in total.

Each neuron of the second hidden layer is connected to the node of $\one\{x_i \ge c_{j}\}$ and the node of  $\one\{x_i <c_{j}'\}$ from the first hidden layer using deterministic weights and biases so that it computes $\bar \sigma(\one\{x_i \ge c_{j}\}+\one\{x_i <  c_{j}'\}-1.5)=\one\{x_i \in[ j/r ,(j+1)/r)\}$, where $\bar \sigma$ is the sign function in this first step of the proof. As a result, the second hidden layer uses  $(r+1)n$ neurons in total.

Similarly, 
using deterministic weights and biases, each neuron of the third hidden layer computes $\bar \sigma(\sum_{i=1}^t\one\{x_i \in[\varphi(i)/r ,(\varphi(i)+1)/r)\}-t+0.5)=\one\{x \in [\varphi(1)/r ,(\varphi(1)+1)/r) \times[\varphi(2)/r ,(\varphi(2)+1)/r) \times \cdots \times [\varphi(t)/r ,(\varphi(t)+1)/r) \}$, for each $\varphi \in \Phi$ where  $\Phi$  is the set of all functions from $\{1,2,\dots,t\} $ to $\{0,1,2\dots,r-1\}$ (i.e., we have $|\Phi|$ neurons at the third hidden layer). 

For the rest of hidden layers until $l-1$-th layer, we copy the output of the third hidden layer: i.e., $\bsigma(z-0.5)=\one\{z>0.5\}$ with the bias term being $0.5$.

By connecting the  $l-1$  layer to the output layer, we can write 
\begin{align}
f_{\theta}(x)&=\sum_{\varphi \in \Phi} w_{\varphi}\one\{x \in[\varphi(1)/r ,(\varphi(1)+1)/r) \times \cdots \times [\varphi(t)/r ,(\varphi(t)+1)/r)\}
\end{align}
where $w_{\varphi}\in \RR^{m}$ is the weight connecting the   third hidden layer neuron with $\varphi$ to the output layer. We define the distribution on this weight by $w_{\varphi} \sim q(\cdot|X=x_\varphi)$ for all $\phi \in \Phi$ and $y \in \Ycal$, where $x_\varphi=(\varphi(1)/r,\varphi(2)/r,\dots,\varphi(t)/r)$.  \ This implies that
  $\PP_{\theta \sim p(\theta)}[w_{\varphi}\in  b_{y}]=q(Y \in  b_{y}|X=x_\varphi)$ for all $\varphi \in \Phi$ and $b_{y}\in  \Bcal( \Ycal)$.  Since exactly one of those neurons at  the third hidden layer will be active for any given input $x$,  we have that $f_{\theta}(x)=w_{x}\in \RR^{m}$ where $w_x$ is the one of $w_{\varphi}$ such that $\one\{x \in[\varphi(1)/r ,(\varphi(1)+1)/r) \times \cdots \times [\varphi(t)/r ,(\varphi(t)+1)/r)\}=1$. Thus, we have that for any $\varphi \in \Phi$ and any $x \in[\varphi(1)/r ,(\varphi(1)+1)/r) \times \cdots \times [\varphi(t)/r ,(\varphi(t)+1)/r), $
$$
\PP_{\theta \sim p(\theta)}[f_{\theta}(x)\in  b_{y}]=\PP_{\theta \sim p(\theta)}[w_{\varphi}\in  b_{y}]=q(Y \in  b_{y}|X=x_\varphi)=q(Y \in  b_{y}|X=x)+\delta_1,
$$
where  $\delta_1 \rightarrow 0$ as $r \rightarrow \infty$ from the continuity of the function $x\mapsto q(Y \in B_{\epsilon} (y)|X=x)$. Therefore, for all $x \in \Xcal$ and  $b_{y}\in  \Bcal( \Ycal)$, 
\begin{align}
\PP_{\theta \sim p(\theta)}[f_{\theta}(x)\in  b_{y}] =q(Y \in  b_{y}|X=x)+\delta_1,
\end{align} 
where  $\delta_1 \rightarrow 0$ as the size of the network increases ($r \rightarrow \infty$). This completes the first step.

We now consider the second step where we approximate the sign activation with the sigmoid activation. For the first three hidden layers, we use the same construction as the first step above except that we use   the arbitrarily large scale  of weights and biases  to approximate the sign function. With this, the output of the the  $l-1$  layer  becomes similar to that of the sign activation but with the error  term $\epsilon_2$:
\begin{align}
f_{\theta}(x)&=\sum_{\varphi \in \Phi} w_{\varphi}(\one\{x \in[\varphi(1)/r ,(\varphi(1)+1)/r) \times \cdots \times [\varphi(t)/r ,(\varphi(t)+1)/r)\}+\epsilon_2)
\end{align}
where $\epsilon_2 \rightarrow 0$ as the scales of weights and biases within first three hidden layers approach infinity. For the next year, we use the exactly  the same construction. As a result,
for any $\varphi \in \Phi$ and any $x \in[\varphi(1)/r ,(\varphi(1)+1)/r) \times \cdots \times [\varphi(t)/r ,(\varphi(t)+1)/r)$,
\begin{align*}
\PP_{\theta \sim p(\theta)}[f_{\theta}(x)\in  b_{y}]&=\PP_{\theta \sim p(\theta)}[(1+\epsilon_2)w_{\varphi}\in  b_{y}] 
\\ & =q((1+\epsilon_2)Y \in  b_{y}|X=x_\varphi)
\\ &=q(Y \in  b_{y}|X=x_\varphi)+ \delta_2
\end{align*}
where  $\delta_2 \rightarrow 0$ as  $\epsilon_2 \rightarrow 0$  based on the continuity of the function  $\alpha\mapsto  q((1+\alpha)Y \in b_{y}|X=x)$. Moreover, $q(Y \in  b_{y}|X=x_\varphi)+ \delta_2=q(Y \in  b_{y}|X=x)+\delta_1+\delta_2$ from the continuity  of the function $x\mapsto q(Y \in B_{\epsilon} (y)|X=x)$ as before. 
Thus, for all $x \in \Xcal$ and  $b_{y}\in  \Bcal( \Ycal)$, 
\begin{align}
\PP_{\theta \sim p(\theta)}[f_{\theta}(x)\in  b_{y}]=q(Y \in  b_{y}|X=x)+\delta_1+ \delta_2.
\end{align} 
By combining the upper bound on  $D_{\mathrm{TV}}$ above, we have that 
$$
D_{\mathrm{TV}}(p,q) \le|\delta_1+ \delta_2|  ,  
$$
where  $\delta_1 \rightarrow 0$ as the size of the network increases ($r \rightarrow \infty$), and   $\delta_2 \rightarrow 0$ as  the scales of weights and biases (within first three hidden layers) increases.  

\end{proof}

\section{Proofs of empirical losses being sub-cases of functional losses}~\label{app:equivalent}
\subsection{Mean-squared error and $L_1$ loss as a functional losses}~\label{subsubsec:app_MSE}
Let our dataset $\dtrain=\{(x_i,y_i)\}_{i=1}^n$, $y_i\in\bR^1$, and let $\cL_{MSE} = \frac{1}{n}\sum_{i=1}^n (f(x_i)-y_i)^2$, i.e. the mean-squared error loss. 

\begin{lemma}
For any arbitrary function class $f_{\theta,\beta}(x)$ expressible as $f_{\theta,\beta}(x) = f_{\theta}(x)+\beta$, there exists a functional loss restricted to functional adaptations $\theta_i=\theta$ that only change $\beta\rightarrow\beta_i$ which is equivalent to the mean-squared error loss.
\end{lemma}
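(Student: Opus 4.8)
The plan is to substitute the restricted adaptation family directly into the FRM objective of Equation~\ref{eq:FRM_objective} and show that every term either reduces to the squared residual or becomes a constant that is irrelevant to the $\argmax{}$. I would take the functional loss to be the squared difference of outputs, $\cL(a,b)=(a-b)^2$ (a scaled version $\tfrac{1}{2\sigma^2}(a-b)^2$ behaves identically), write the reference/prediction parameter as $(\theta,\beta)$, and let the per-point adaptations be $(\theta,\beta_i)$ that keep $\theta$ fixed and only move the bias, as the statement prescribes.

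First I would exploit the additive structure $f_{\theta,\beta}(x)=f_\theta(x)+\beta$ to collapse the functional metric. Because the $\theta$ part is shared between the adapted model and the reference model, the two functions differ by a constant in $x$: $f_{\theta,\beta_i}(x)-f_{\theta,\beta}(x)=\beta_i-\beta$. Hence $\delta_\cL(\theta_i,\theta^*)=\bE_x[(f_{\theta,\beta_i}(x)-f_{\theta,\beta}(x))^2]=(\beta_i-\beta)^2$, which no longer depends on $\cP(x)$. This is the key simplification that turns an expectation over function space into a pointwise output-space quantity. Next I would resolve the constraint $f_{\theta_i}(x_i)=y_i$: since only $\beta_i$ is free, $f_\theta(x_i)+\beta_i=y_i$ pins $\beta_i=y_i-f_\theta(x_i)$ uniquely, so the constrained integral over $\theta_i$ collapses to a single point. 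Evaluating the integrand there and using $\beta_i-\beta=y_i-f_\theta(x_i)-\beta=y_i-f_{\theta^*}(x_i)$ gives $\log\int e^{-\delta_\cL}\,d\theta_i=-(y_i-f_{\theta^*}(x_i))^2$, and summing over the dataset reproduces $-\sum_i(y_i-f_{\theta^*}(x_i))^2$, i.e.\ $-n$ times $\cL_{MSE}$ up to scale.

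Finally I would dispatch the normalizer. Restricted to bias adaptations, $Z(\theta^*)=\int_\beta e^{-(\beta-\beta^*)^2}\,d\beta=\sqrt{\pi}$ via the translation $u=\beta-\beta^*$; it is independent of $\theta^*$, so $-\log Z(\theta^*)$ is an additive constant and drops from the $\argmax{}$. Assembling the pieces rewrites the FRM objective as $\argmin{\theta^*}\sum_i(y_i-f_{\theta^*}(x_i))^2$, which shares its minimizer with $\cL_{MSE}$, establishing the claimed equivalence.

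The main obstacle is the single genuinely delicate point: the constraint set $\{\beta_i:f_\theta(x_i)+\beta_i=y_i\}$ has Lebesgue measure zero, so the constrained ``integral'' must be interpreted as evaluation of the density against the surface (delta) measure induced by the linear constraint, and I would have to argue that the accompanying Jacobian factor is the same for every $i$ so that, like $Z$, it is absorbed into the irrelevant additive constant. Everything else is direct substitution; the entire conceptual content sits in the cancellation $f_{\theta,\beta_i}-f_{\theta,\beta}=\beta_i-\beta$, which removes the $\bE_x$ and makes the functional loss coincide with the output-space squared error.
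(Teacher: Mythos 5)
Your proposal is correct and follows essentially the same route as the paper's proof: the bias-only constraint pins $\beta_i=y_i-f_{\theta}(x_i)$ uniquely, the additive structure collapses $\bE_x\cL(f_{\theta,\beta_i},f_{\theta,\beta})$ to $(\beta_i-\beta)^2$, and summing the resulting log-likelihoods recovers $\cL_{MSE}$. You are in fact somewhat more careful than the paper, which silently treats the constrained integral as evaluation at the unique point and never discusses the normalizer $Z(\theta^*)$ or the surface-measure/Jacobian issue you flag.
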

\textbf{Proof} Since we can only change $\beta$ there is a single solution to the per-point constraint: $f_\theta(x_i) = f_{\theta}(x_i)+\beta_i = y_i \Rightarrow \beta_i = y_i-f_{\theta}(x_i)$.
We can now model the probability distribution over functions $\cF\left(\theta,\beta_i|\theta,\beta,\cL_{MSE}\right)$ as a gaussian centered at $(\theta,\beta)$. Since $\theta$ doesn't change, this will just be $\cN(\beta_i-\beta)$. Maximizing the mean of the log-probabilities will result in $\frac{1}{n}\sum_i \log\cN(\beta_i-\beta) = \frac{1}{n}\sum_i(\beta_i-\beta)^2 = \frac{1}{n}\sum_i(y_i-f_{\theta}(x_i)-\beta)^2 = \frac{1}{n}\sum_i(y_i-f_{\theta,\beta}(x_i))^2=\cL_{MSE}$.

Of note, the Gaussian model of the functional distribution satisfies $$\cF\left(\theta,\beta_i|\theta,\beta,\cL_{MSE}\right)=\cN\left((\theta,\beta_i)-(\theta,\beta)\right) \propto e^{-|\beta-\beta_i|^2} = e^{-\bE_x\cL_{MSE}(f_{\theta,\beta},f_{\theta',\beta'})}.$$
This is because for all $x$, $\cL_{MSE}\left(f_{\theta,\beta}(x)-f_{\theta',\beta'}(x)\right) = |f_{\theta,\beta}(x)-f_{\theta',\beta'}(x)|^2 = |\beta-\beta'|^2$.

Finally, we note that the entire derivation can be equivalently followed for the L1 loss by swapping $|\cdot|^2$ for $|\cdot|$ and the Gaussian distribution for the Laplace distribution.
\subsection{Classification error as a functional loss}~\label{subsubsec:app_class}
Let us now look at multi-class classification and let our dataset $\dtrain=\{(x_i,y_i)\}_{i=1}^n$, $y_i\in\{1,\dots,C\}$. Our function class will output in an unconstrained \textit{logit} space $\bR^C$ and we define $\cL_{cls} = \frac{1}{n}\sum_{i=1}^n \bOne\llbracket y_i=\argmax{c} \left(f_{\theta,\beta}(x_i)\right)_c\rrbracket$, i.e. the classification error. As in previous sections, abusing notation we will refer to $\bOne\llbracket y_i=\argmax{c} \left(f_{\theta,\beta}(x_i)\right)_c\rrbracket$ as $\bOne\llbracket y_i=f_{\theta,\beta}(x_i)\rrbracket$.

\begin{lemma}
For any arbitrary function class $f_{\theta,\beta}(x)$ expressible as $f_{\theta,\beta}(x) = f_{\theta}(x)+\beta$, $\beta\in\bR^c$, constrained on $f_\theta(x)$ being finite, there exists a functional loss restricted to functional adaptations $\theta_i=\theta$ that only change $\beta\rightarrow\beta_i$ which is equivalent to the classification error.
\end{lemma}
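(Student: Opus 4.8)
The plan is to mirror the structure of the mean-squared-error lemma, but to account for the fact that in classification the per-point constraint is satisfied by a whole \emph{region} of bias vectors rather than a single one. Since only $\beta$ may change, the constraint ``$f_{\theta,\beta_i}$ classifies $x_i$ correctly'' becomes $\argmax{c}\left(f_\theta(x_i)+\beta_i\right)_c = y_i$, which defines a polyhedral cone $R_{y_i}(\theta) = \{\beta_i\in\bR^C : \argmax{c}\left(f_\theta(x_i)+\beta_i\right)_c = y_i\}$ of positive Lebesgue measure (a full-dimensional region for generic $f_\theta(x_i)$). The probability that a sampled function fits the point is then the mass that the per-point functional distribution $\cF(\beta_i\mid\beta)$ assigns to $R_{y_i}(\theta)$, and the objective to analyze is $\sum_i \log \int_{R_{y_i}(\theta)} \cF(\beta_i\mid\beta)\,d\beta_i$.

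Next I would specify the functional distribution dictated by the accuracy loss. Writing the accuracy loss between two functions as the disagreement indicator $\cL_{cls}(f_{\theta,\beta_i},f_{\theta,\beta}) = \BOne{\argmax{c}(f_\theta(\cdot)+\beta_i)_c \neq \argmax{c}(f_\theta(\cdot)+\beta)_c}$, the induced density $\cF(\beta_i\mid\beta)\propto e^{-\bE_x \cL_{cls}}$ takes only two effective levels: it is maximal where $\beta_i$ induces the same classification as $\beta$ almost everywhere (exponent $0$) and drops to a constant floor where the two functions disagree almost everywhere (exponent $\approx 1$). This is precisely the ``delta plus flat'' distribution of figure~\ref{fig:common_manim}: a point mass of weight $\lambda$ concentrated at the central classification together with a flat component of weight $1-\lambda$ spread over bias space.

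With this model the key computation is the mass $\cF$ assigns to $R_{y_i}(\theta)$. The delta contributes $\lambda\,\BOne{y_i = \argmax{c}(f_{\theta,\beta}(x_i))_c}$, i.e.\ exactly the event that the \emph{central} parameter already classifies $x_i$ correctly. For the flat component I would invoke the hypothesis that $f_\theta(x_i)$ is finite: spreading $\beta_i$ uniformly over a symmetric region that grows without bound, the finite shift $f_\theta(x_i)$ becomes negligible and $\argmax{c}(f_\theta(x_i)+\beta_i)_c$ is governed by $\beta_i$ alone, so by permutation symmetry each of the $C$ classes is the argmax with probability $1/C$. Hence the fit probability is the two-valued quantity $P_i = \lambda\,\BOne{\text{$i$ correct}} + (1-\lambda)/C$.

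Finally, letting $n_{\mathrm c}$ be the number of correctly classified points, the objective collapses to $\sum_i \log P_i = n_{\mathrm c}\log\!\big(\lambda+\tfrac{1-\lambda}{C}\big) + (n-n_{\mathrm c})\log\!\big(\tfrac{1-\lambda}{C}\big)$, which is an increasing affine function of $n_{\mathrm c} = n\,\cL_{cls}$ because $\lambda+\tfrac{1-\lambda}{C} > \tfrac{1-\lambda}{C}$ for any $\lambda\in(0,1)$. Maximizing the functional log-likelihood over $(\theta,\beta)$ is therefore equivalent to maximizing $\cL_{cls}$ (the classification accuracy), equivalently to minimizing the classification error, which establishes the claim. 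I expect the main obstacle to be the second paragraph: rigorously arguing that the accuracy-induced density $e^{-\bE_x\cL_{cls}}$ is well captured by the idealized delta-plus-flat form, in particular handling its normalization and the nontrivial transition region between the agreement peak and the floor (the peak is not a genuine point mass). The finiteness assumption on $f_\theta$ is what renders the flat component symmetric across classes and is the crux of obtaining the clean $1/C$ factor.
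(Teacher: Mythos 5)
Your proposal is correct and follows essentially the same route as the paper: a delta-plus-flat mixture over the output biases, the fit probability $p\cdot\BOne{\text{$i$ correct}}+(1-p)/C$ (with the $1/C$ term justified exactly as you say, by the finiteness of $f_\theta(x_i)$ and the class-symmetry of the flat component), and the observation that the resulting log-likelihood is an increasing affine function of the number of correctly classified points. The only substantive difference is that the paper simply \emph{exhibits} this mixture, choosing $p=\frac{e-1}{C+e-1}$ so that the objective equals the accuracy plus an additive constant, rather than deriving it from $e^{-\bE_x\cL_{cls}}$; consequently the obstacle you flag in your second paragraph (rigorously identifying the induced two-level density with an idealized delta-plus-flat form) is not needed for the existence claim the lemma actually makes, and your generic $\lambda\in(0,1)$ with monotone equivalence is an equally valid way to close the argument.
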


\textbf{Proof} We will show that a solution is given by $\cF\left(\theta,\beta_i|\theta,\beta,\cL_{cls}\right) = p\cdot\delta(\beta_i-\beta)+(1-p)\lim_{\sigma\rightarrow\infty}\cN(0,\sigma)(\beta)$, with $p=\frac{e-1}{C+e-1}\in(0,1)$. In other words, a specific positive (note the open brackets) combination of an infinitely-sharp distribution (Dirac's delta) with an infinitely-flat distribution. Given a fixed $p,\theta,\beta$, the probability of $y_i=\argmax{c} f_{\theta_i,\beta_i}(x_i)$ will be equal to $p\cdot\left[y_i=\argmax{c} \left(f_{\theta,\beta}\right)_c\right] + \frac{1-p}{C}$. This comes directly from the definition of the functional probability distribution: with probability $p$, we have $(\theta_i,\beta_i) = (\theta,\beta)$ and thus the result depends solely on $(\theta,\beta)$; with probability $(1-p)$ the logits are perturbed by an infinitely strong noise and thus the $\argmax{}$ will just be a uniform distribution over the classes, i.e. $\frac{1}{C}$.

Now, the average log-likelihood of the functional loss will be:
\begin{align*}
    \frac{1}{n}\sum_{i=1}^n \log\left(p\cdot\bOne\llbracket y_i=f_{\theta,\beta}(x_i)\rrbracket + \frac{1-p}{C}\right) &= \\
    \log{\frac{1-p}{C}} + \frac{1}{n}\sum_{i=1}^n \log\left(\frac{p\cdot\bOne\llbracket y_i=f_{\theta,\beta}(x_i)\rrbracket +(1-p)/C}{(1-p)/C}\right) &= \\
    \log{\frac{1-p}{C}} + \log\left(\frac{p+(1-p)/C}{(1-p)/C}\right)\frac{1}{n}\sum_{i=1}^n \bOne\llbracket y_i=f_{\theta,\beta}(x_i)\rrbracket &= \\
    \log{\frac{1-p}{C}} + \log\left(1+\frac{pC}{1-p}\right)\cL_{cls}&=\\
    -\log\left(C+e-1\right)+\cL_{cls}&.
\end{align*}
where in the second step we observe that the log term within the sum is zero when $y_i\neq f_{\theta,\beta}(x_i)$ and, in the last step, we have set $p=\frac{e-1}{C+e-1}$, which by construction is in $(0,1)$. We can now easily see that this is equivalent to $\cL_{cls}$ up to a constant additive term, which will not affect any optimization.
\subsection{Cross-entropy loss as a functional loss}~\label{subsubsec:app_CE}
Continuing in multi-class classification and let our dataset $\dtrain=\{(x_i,y_i)\}_{i=1}^n$, $y_i\in\{1,\dots,C\}$. Our function class will output in an unconstrained \textit{logit} space $\bR^C$ and we define $\cL_{CE} = \frac{1}{n}\sum_{i=1}^n \log \sigma\left(f_{\theta,\beta}\right)_{y_i}$, i.e. the cross-entropy loss. Here, $\sigma(\cdot)_c$ corresponds to taking the $c$-th component of the softmax of a given logit to obtain the probability of a given class $c$ given the logit predictions.

\begin{lemma}
For any arbitrary function class $f_{\theta,\beta}(x)$ expressible as $f_{\theta,\beta}(x) = f_{\theta}(x)+\beta$, $\beta\in\bR^C$, there exists a functional loss restricted to functional adaptations $\theta_i=\theta$ that only change $\beta\rightarrow\beta_i$ which is equivalent to the cross-entropy loss.
\end{lemma}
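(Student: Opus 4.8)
The plan is to reuse the template of the two previous subsections, positing a functional distribution over the per-point bias and verifying that the induced average log-likelihood equals $\cL_{CE}$; the only new ingredient is that the correct distribution is now an iid Gumbel, which converts the constrained fitting probability into a softmax via the Gumbel-max trick. Concretely, since the adaptation is restricted to the last-layer bias, I write $\beta_i=\beta+g_i$ with $g_i\in\bR^C$ and model each coordinate of $g_i$ as an independent standard Gumbel, so that $\cF(\theta,\beta_i\mid\theta,\beta,\cL_{CE})$ is a product of $C$ Gumbel densities centered coordinatewise at $\beta$. As in the classification-error lemma, the event ``$f_{\theta,\beta_i}$ fits $(x_i,y_i)$'' is $y_i=\argmax{c}(f_\theta(x_i)+\beta_i)_c$, which now carves out a cone in bias space rather than the single point that appeared in the regression case.

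First I would compute the per-point fitting probability $\bP_{g_i}[\,y_i=\argmax{c}(z+g_i)_c\,]$, where $z:=f_{\theta,\beta}(x_i)$ are the logits. This is the classical Gumbel-max computation: conditioning on the value $g_{i,y_i}$ at the correct coordinate and requiring every competing perturbed logit $z_c+g_{i,c}$ to fall below $z_{y_i}+g_{i,y_i}$, the other coordinates contribute a product of Gumbel cdf factors $\prod_{c\ne y_i}\exp(-e^{-u}e^{z_c-z_{y_i}})$, where $u=g_{i,y_i}$. Folding in the density $e^{-u-e^{-u}}$ of the correct coordinate completes the product to $e^{-u}\exp(-e^{-u}\sum_c e^{z_c-z_{y_i}})$, and the substitution $v=e^{-u}$ turns the integral into $\int_0^\infty e^{-Sv}\,dv=1/S$ with $S=\sum_c e^{z_c-z_{y_i}}$. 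Hence the fitting probability equals $e^{z_{y_i}}/\sum_c e^{z_c}=\sigma(z)_{y_i}$.

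With the per-point probability identified as $\sigma(f_{\theta,\beta}(x_i))_{y_i}$, the conclusion is immediate: taking logarithms and averaging over the dataset yields $\frac1n\sum_{i=1}^n\log\sigma(f_{\theta,\beta}(x_i))_{y_i}=\cL_{CE}$, so maximizing the average functional log-likelihood coincides with the stated cross-entropy objective. I would close, paralleling the ``of note'' remark of the MSE proof, by recording that the Gumbel is the natural choice here in the sense indicated by Figure \ref{fig:common_manim}(c).

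The main obstacle is the Gumbel-max integral itself, which is the only step involving genuine computation; the delicate points are the coordinatewise independence that lets the cdf factors and the correct-coordinate density combine into a single exponential, and the change of variables $v=e^{-u}$ that makes the integral elementary. Everything else --- identifying $\beta_i-\beta$ as the noise, writing the constraint as a cone, and passing from fitting probability to averaged log-likelihood --- is routine and mirrors the two preceding lemmas. The one set-theoretic nicety to dispatch is the tie set where the argmax is not unique, which has Lebesgue measure zero under the continuous Gumbel law and is therefore negligible.
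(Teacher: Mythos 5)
Your proposal is correct and follows essentially the same route as the paper: iid Gumbel noise on the last-layer bias $\beta_i=\beta+g_i$, so that by the Gumbel-max trick the per-point fitting probability is the softmax $\sigma(f_{\theta,\beta}(x_i))_{y_i}$, whence the averaged log-likelihood is $\cL_{CE}$. The only difference is that you carry out the Gumbel-max integral explicitly where the paper simply cites it from the Gumbel-softmax literature, and your computation is correct.
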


\textbf{Proof} As shown in~\citep{jang2016categorical,maddison2016concrete} if we have logits $\gamma_c=f_\theta(x_i)_c+\beta_c$ we can sample from the probability distribution of distribution equal to $\sigma(\gamma)$ by $c=\argmax{i}(\gamma_i+g_i$) where each $g_i$ follows an independent Gumbel distribution, i.e. $g_i=-\log(-\log u_i), u_i\sim \cU(0,1)$. This gives us a trivial expression for a functional distribution over which to make maximum likelihood: $\beta_i\sim \beta + \cG$, where $\cG$ consists of $c$ independent Gumbel noise variables. This is because, since $\beta$ lives in logit space, adding noise to $\beta$ is equivalent to adding noise to the logits. Finally, since the cross-entropy loss is the maximum likelihood assuming a probability distribution given by the logits and we have shown a functional distribution with the same distribution, performing maximum likelihood on that distribution is equivalent to minimizing the cross-entropy loss.

\section{Example: predicting house prices with linear regression} ~\label{sec:house_prices}
Let's consider predicting the price of a house given its surface area using a linear regressor: $y=\lambda x + \beta$ and the mean-squared error loss function. ERM would simply find the $\lambda,\beta$ leading to the lowest squared error on the training data. This is equivalent to doing maximum likelihood on a gaussian noise model $y_i \sim N(\lambda x_i + \beta,\sigma^2)$ with constant $\sigma$. 
However, this may be suboptimal. For instance, we intuitively know that prices of bigger houses tend to be higher, but also have larger variance: we expect the price of a very large house to vary by 500k, but we would not expect the same 500k variation for a small house.

When using the FRM framework, we assume that, for each house $(x_i, y_i)$ there are different $\lambda_i,\beta_i$, satisfying $y_i = \lambda_i x_i + \beta_i$. For instance, we may believe that agent commissions vary and are well-modeled by $\beta_i$, and that the price-per-meter-squared (captured by $\lambda_i$) changes depending on the neighborhood. This is the modeling made by FGMs, which is more flexible than the output-level noise model corresponding to mean-squared error. We show this effect in figure~\ref{fig:HoughTransform}.

\section{Functional noise in a CNN}
To show the value of the Taylor approximation, we create a dataset by sampling different parameter assignments on a 4-layer CNN architecture. The CNN takes in a CIFAR-10 image and outputs a real number. We provide only 8 labels to each method, allowing empirical risk minimization to easily memorize the dataset. Despite FRM obtaining substantially higher training losses ($.000$ vs $.052$), we observe FRM obtains significantly less test error ($.125$ to $.085$).

We also test the ability of FRM to modify its training depending on the loss function. Although this is obviously the case for ERM, in approximate FRM the loss function enters only in an indirect way, affecting the hessian in equation~\ref{eq:Taylor_FRM_objective}. We modify the objective by creating two different losses, which assign zero loss to labels that are either positive or negative, respectively. Table~\ref{tab:toy_CNN} shows that indeed FRM performs better when trained and tested on the same loss ($0.085$ vs $0.128$).

\begin{table}[h]
    \centering
    \caption[FRM outperforms ERM in a small CNN environment]{FRM outperforms ERM in a small CNN environment despite ERM having 0 training loss. Furthermore, the Hessian can be enough to express the dependence on the loss function.}
    \begin{tabular}{@{}llcccc@{}}
        \toprule
        \multicolumn{2}{c}{Objective}& \multicolumn{2}{c}{Train} & \multicolumn{2}{c}{Test} \\ \cmidrule(lr){3-4} \cmidrule(lr){5-6}
         && positives & negatives & positives & negatives\\
        \midrule
         ERM & positives & \textbf{.000} $\pm$ \textbf{.000}& $.283\pm .016$ & $.130\pm.006$& $.278\pm.013$\\
        & negatives & $.336\pm .020$& \textbf{.000} $\pm$ \textbf{.000} & $.323\pm.018$& $.119\pm.005$\\  \hline
        FRM &positives & $.052\pm .002$ & $.109\pm .007$ & \textbf{.085} $\pm$ \textbf{.004} & $.124\pm .006$ \\
        & negatives & $.131\pm .010$ & $.052\pm .002$ & $.136\pm .009$ & \textbf{.084}$\pm$\textbf{.004} \\
        \bottomrule
    \end{tabular}
    \label{tab:toy_CNN}
\end{table}

\section{Further understanding the difference between ERM and FRM}
\textbf{The ERM assumption: } by assuming that the training objective is equal to the test loss $\cL$, ERM can be suboptimal for certain $\cP(\theta)$, like the house example on section~\ref{subsec:sampling_functions}. As shown in appendix~\ref{app:equivalent}, for many loss functions $\cL$, including most of the common ones, ERM is equivalent to assuming the functional generative model and then doing maximum likelihood on $\cP(\theta)$ by assuming it has a form parameterized by $\hat{\theta}$ whose uncertainty is only on the output offset parameters. In other words, the assumption equivalent to performing ERM is often strictly more assuming than FGMs.

For instance, consider predicting the price of different houses as a function of their size and having MSE as the loss. Doing empirical risk minimization with the MSE would be equivalent to doing maximum likelihood on the following price model: $y_i\sim\cN(f(x_i),\sigma^2)$. However, we would expect noise to be \textit{heteroskedastic} with higher variations for higher prices.
Thus, even if we are evaluated on MSE on the test data, it may not be advisable to use it as our training criteria.

Similarly, consider a child learning a concept from examples on a textbook rather than from standardized images of a dataset. Images may receive different illuminations from the sunlight, or be in different positions than we expect. These factors will produce massive changes in pixel space, but in very structured ways~(fig~\ref{fig:structured}). However, humans can still easily grasp the idea because the 'conceptual' noise is small. 

How can we have more meaningful noise models? By construction, we will often believe that the function class $f_\theta$ is a good characterization of the relationship between $x$ and $y$. It is thus a natural assumption to define a noise model by leveraging the function class itself. More concretely, we can think of a generative model of the data as first sampling the input $x_i$, then sampling a function $f_i\sim\cF(\mathcal{L},\theta)$ from some parameterized distribution over functions, which will depend on both the problem-specified loss function $\cL$ as well as the function class $f_\theta$. Once the function and the input have been sampled, the output is automatically determined $y_i=f_i(x_i)$, see the right of figure~\ref{fig:generative_model}.

For example, in our house-price prediction, if we are using a linear model $f(x)=\lambda \cdot x + \beta$, then it makes sense to think about our data as coming from first sampling $x_i\sim p(x)$ and $(\lambda_i,\beta_i)\sim\cF(\cL, (\lambda,\beta))$, then computing $y_i=\lambda_i\cdot x_i+\beta_i$, as shown on the right of figure~\ref{fig:generative_model}. For instance, $\beta_i$ can model different commissions or taxes, and $\lambda_i$ can model the per-meter-square price being variable across neighborhoods. Even if we care about making accurate predictions in dollar-space, assuming our uncertainty is only in the offset term $\beta_i$ may be too restrictive.

\hspace{-3mm}
\subsection{ERM vs FRM for the linear case}~\label{subsec:vislin}
Let us now take a deeper look at our linear regression example. We have a dataset $\dtrain=\{(x_i,y_i)\}$, depicted in the top-right of figure~\ref{fig:HoughTransformA}, with an arbitrary color per point. For every point, there is a subspace of models $(\lambda_i,\beta_i)$ s.t. $\lambda_ix_i+\beta_i=y_i$. Since we only have two parameters, we can also look at function-space in 2-D, and plot the corresponding subspace for each point, in the bottom-left of figure~\ref{fig:HoughTransformA}. We observe that every point gives us a line in function space, which we plot with the corresponding color.

Our goal is to produce a probability distribution $\cP(\lambda,\beta)$ such that the sum of the log-densities of each line $(\lambda_i,\beta_i)_{\lambda_ix_i+\beta_i=y_i}$ is maximal. Intuitively, this means that each line should pass through a high-density area of the probability distribution, but it does not mean that the line should be covered by the high-density area (which is not possible, since they're unbounded). This can be seen in figure~\ref{fig:HoughTransformB} where all lines pass near the center of the distribution generating the data (marked in green).

We can further see that ERM with the MSE loss is equivalent to finding a point $(\lambda^{ERM},\beta^{ERM})$ that minimizes the \textit{vertical} distance to each line: 

\begin{align*}
(\lambda^{ERM},\beta^{ERM}) = &\min_{\lambda,\beta} \sum_i (y_i-\lambda x_i-\beta)^2 \\  =&\min_{\substack{\lambda,\beta,\\\lambda^i:\lambda^i=\lambda}} \sum_i (y_i-\lambda_ix_i-\beta)^2\\ =&\min_{\substack{\lambda,\beta,\{\lambda^i,\beta^i\}:\\\lambda^i=\lambda,\\\lambda_ix_i+\beta_i=y_i,}} \sum_i (\beta_i-\beta)^2.    
\end{align*}

\vspace{-3mm}
In contrast, if the probability distribution in parameter space is a Gaussian, FRM involves taking the distance of the entire vector $(\lambda,\beta)$, using the inverse covariance matrix as the metric. For cases where most of the uncertainty is in the slope, as in figure~\ref{fig:HoughTransformB}, ERM measures the distance in the vertical direction and FRM measures it almost horizontally, leading to different results.

\subsection{Visualization for a simple fully convolutional network}~\label{subsec:visconv}
Figure~\ref{fig:structured} shows the difference between MSE and its functional correspondent for a small fully-convolutional network mapping images to images $f_\theta$. Images $y$ with the same empirical loss $|y-f_\theta(x)|^2$ could require very different functional adaptations to explain: $\min_{\theta':f_{\theta'}(x)=y}|\theta'-\theta|_{f,\cL}$. For instance, if one does edge detection and mistakenly translates its prediction a bit to the right, this small change in functional space could lead to a large error in pixel space. Similarly, if we have a pattern detector and we slightly change its threshold, it could make the entire prediction darker or lighter.~\looseness=-1

Conversely, if we add unstructured noise onto our image, it is to be expected that it will have a high functional loss as no small perturbation of the function could simultaneously explain pure noise. That's indeed what we observe in figure~\ref{fig:structured} when we look for images with high and low functional loss for a fixed empirical loss. Images with high functional loss contain salt-and-pepper-like noise that breaks the smooth pattern of the original image. In contrast, images with low functional loss preserve the overall structure while uniformly shifting large blocks of pixels to a much lighter color. If the noise in our data is better represented by our functional class than noise in the output, we can take this into account to improve learning.

\end{document}